\documentclass[journal]{IEEEtran}
\usepackage{amsmath,amsfonts,amssymb,bm}
\usepackage{array}
\usepackage[caption=false,font=normalsize,labelfont=sf,textfont=sf]{subfig}
\usepackage{textcomp}
\usepackage{stfloats}
\usepackage{placeins}
\usepackage{url}
\usepackage{graphicx}
\usepackage{lineno,hyperref}
\hypersetup{hidelinks,hypertexnames=false}
\usepackage{color}
\usepackage{threeparttable}
\usepackage{cite}
\usepackage{booktabs}
\usepackage{pifont}
\usepackage{multirow}
\usepackage{tabularx}
\usepackage{algorithm}
\usepackage{algorithmic}
\usepackage{makecell}
\newtheorem{proposition}{Proposition}
\newtheorem{remark}{Remark}
\newenvironment{proof}{\noindent\textit{Proof.}\ }{\hfill$\square$\par}

\newcommand{\method}{GeoRay}
\DeclareRobustCommand{\bench}{SatGauge}
\newcommand{\med}{\operatorname{med}}
\newcommand{\sg}{\operatorname{sg}}
\DeclareMathOperator*{\argmin}{arg\,min}
\providecommand{\best}[1]{\textbf{#1}}
\providecommand{\second}[1]{\underline{#1}}

\newcommand{\GaugeResidPx}{1.3{\times}10^{-4}}
\newcommand{\GaugeDispPx}{75.1}
\newcommand{\GaugeResidM}{7{\times}10^{-5}}
\newcommand{\GaugeResidAff}{0.051}
\newcommand{\GaugeResidLthree}{0.015}

\newcommand{\NullDraws}{500}
\newcommand{\NullMean}{0.008}
\newcommand{\NullSd}{0.342}
\newcommand{\CovOne}{87.0}
\newcommand{\CovTwo}{97.0}
\newcommand{\VarScale}{0.78}
\newcommand{\CovOneU}{86.4}
\newcommand{\CovTwoU}{97.0}
\newcommand{\VarScaleU}{0.79}
\newcommand{\CovOneAbs}{66.7}
\newcommand{\CovTwoAbs}{85.1}

\newcommand{\GainRhoWithin}{0.062}
\newcommand{\AnchorAbs}{3.63}
\newcommand{\AnchorPag}{68.9}
\newcommand{\AnchorCov}{87.2}
\newcommand{\AnchorRf}{86.1}
\newcommand{\BaselineKtenName}{MVSplat}
\newcommand{\BaselineKten}{6.12}
\newcommand{\BaselineShift}{7.21$ and $39.96}
\newcommand{\GeoCI}{[2.36,\,3.74]}
\newcommand{\PairBase}{SkySplat}

\newcommand{\CIabs}{[6.81,\,10.74]}

\begin{document}
	\title{GeoRay: Gauge-Aware Feed-Forward Satellite 3D Reconstruction in the Geodetic Frame}
	
	\author{
	Zhe~Dong,
	Wanqing~Wu,
	Yuzhe~Sun,
	Haochen~Jiang,
	Yuchen~Ma,
	Lecheng~Ren,
	Tianzhu~Liu,~\IEEEmembership{Member,~IEEE},
	and~Yanfeng~Gu,~\IEEEmembership{Senior Member,~IEEE}
	\thanks{This work was supported by the National Natural Science Foundation of China under Grant 624B2051. \emph{(Corresponding author: Yanfeng Gu.)}}
	\thanks{Z.~Dong, Y.~Sun, H.~Jiang, Y.~Ma, T.~Liu, and Y.~Gu are with the School of Electronics and Information Engineering, Harbin Institute of Technology, Harbin 150001, China (e-mail: guyf@hit.edu.cn).}
	\thanks{W.~Wu is with the 54th Research Institute, China Electronics Technology Group Corporation, Shijiazhuang 050081, China, and also with the State Key Laboratory of Comprehensive PNT Network and Equipment Technology, Shijiazhuang 050081, China.}
	\thanks{L.~Ren is with the School of Electrical and Electronic Engineering, University of Manchester, Manchester M13~9PL, U.K.}
}
	
	\maketitle
	
	\begin{abstract}
Feed-forward 3D foundation models reconstruct perspective scenes in one pass.
Satellite photogrammetry needs a different product, one that domain adaptation
alone does not deliver: dense surface height in an absolute geodetic frame
under non-central rational polynomial cameras (RPCs). Perspective-pretrained
features are not reliably observable along RPC height rays, absolute elevation
carries a low-order height--datum gauge exchangeable with sensor bias to first
order, and monocular and multi-view cues fail in different regions. \method{}
treats all three. Lightweight ray-consistent adapters make a frozen backbone
matchable along native RPC rays. An explicit datum mechanism separates relief
from absolute level and is equivariant to the vertical origin by construction,
so one trained model serves zero-, one-, and sparse-control inference.
Calibrated inverse-variance fusion combines the two relief streams. \bench{}, our absolute-frame
benchmark of eighteen systems across in-domain, cross-dataset, and cross-city
tiers, scores absolute placement without registration or test-reference leakage.
On 26 held-out US3D tiles, \method{} attains $2.99$\,m absolute MAE at
$91.9\%$ coverage, improves completeness-aware accuracy by $46.4$ points
over the strongest compliant feed-forward baseline, remains the most
accurate such system under both transfer shifts, and runs in $24$\,s
model-forward time per tile. Code and models will be released at
\url{https://github.com/HIT-SIRS/GeoRay}.
\end{abstract}
	
	\begin{IEEEkeywords}
		Satellite photogrammetry, rational polynomial camera, feed-forward
		3D reconstruction, gauge ambiguity, uncertainty calibration.
	\end{IEEEkeywords}
	
	\begin{figure*}[!t]
    \centering
    \includegraphics[width=\textwidth]{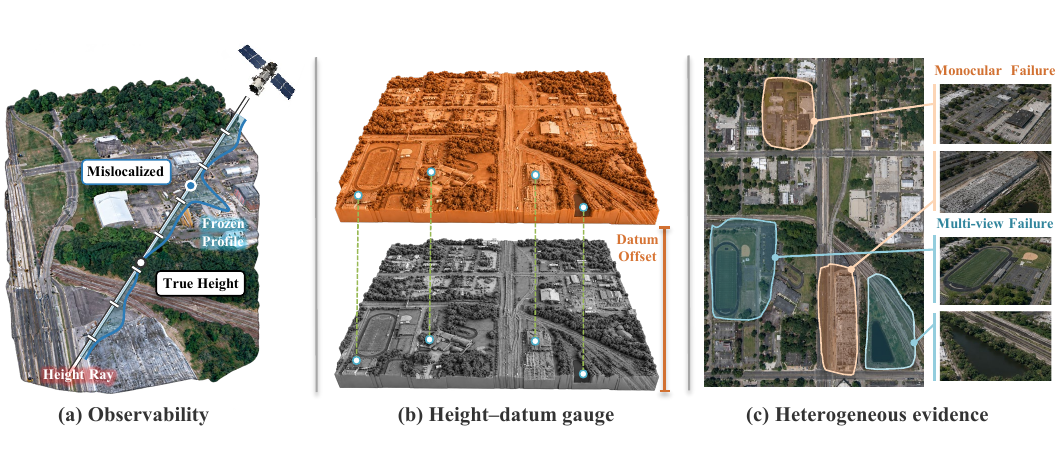}
    \caption{\textbf{Three structural obstacles to absolute satellite 3D.}
    (a) RPC ray-field observability, (b) the height--datum gauge, and
    (c) heterogeneous monocular and multi-view evidence motivate the three
    components of \method{}.}
    \label{fig:teaser}
\end{figure*}

\section{Introduction}
\label{sec:intro}

\IEEEPARstart{F}{eed-forward} 3D models now recover scene geometry from one or
many images in a single pass~\cite{wang2024dust3r,wang2025vggt,keetha2026mapanything},
while monocular foundation models provide strong metric or relative geometry
priors~\cite{yang2024depthanything2,wang2025moge2}. Their success, however,
has been established primarily under central-perspective image formation.
Satellite photogrammetry differs in ways that matter for deployment:
pushbroom sensors have no single projection center
~\cite{gupta1997pushbroom}, geometry is delivered as an RPC mapping
~\cite{tao2001rfm}, and the required product is a digital surface model whose
height is meaningful in an absolute geodetic frame. Existing high-accuracy
solutions therefore remain dominated by per-scene optimization
~\cite{defranchis2014s2p,mari2022satnerf,aira2025eogs} or
satellite-specific supervised models~\cite{gao2021satmvs,huang2026skysplat,yang2026sat3r}.
This paper asks whether feed-forward foundation geometry can be transferred to
native RPC imaging without sacrificing absolute elevation, completeness, or
calibrated uncertainty.

Three obstacles stand in the way. First,
\emph{ray-field observability}: perspective-pretrained features are not
trained to be compared at object-space points reached by a non-central RPC
height ray. Second, \emph{height--datum gauge freedom}: under the standard
low-order RPC bias model, a dominant component of vertical placement can be
exchanged with image-space sensor bias to first order
~\cite{grodecki2003block,fraser2003bias,triggs2000bundle}; accurate local
relief therefore does not imply correct absolute elevation. Third,
\emph{heterogeneous evidence}: monocular priors and multi-view matching fail
in different regions, so a fixed preference for either source is suboptimal.
Between them they determine what is observable, what stays ambiguous, and when
further evidence can improve a prediction.

\method{} handles all three within one feed-forward model. A frozen
VGGT backbone~\cite{wang2025vggt} is adapted with lightweight ray-consistent
low-rank modules so that cross-view evidence localizes correctly along RPC
height hypotheses. The resulting multi-view branch predicts relief and its
uncertainty, while a frozen MoGe-2 prior~\cite{wang2025moge2} supplies a
complementary monocular relief estimate. Absolute level is handled separately
by a datum mechanism whose response to a change of vertical origin is fixed
analytically, with only a small invariant residual learned from data. The two
relief streams are then combined by calibrated inverse-variance weighting.
One trained model then covers zero control, a single height control, and
sparse affine control.

The evaluation follows from the same analysis. Alignment hides the very
vertical placement error under study, test-reference height ranges leak
information into learned predictors, and error averages reward partial
surfaces. We introduce \bench{}, a protocol over public satellite datasets
that scores in the absolute geodetic frame, declares control budgets, excludes
reference-derived test inputs, and reports accuracy jointly with completeness
and relief fidelity across three transfer tiers.

On 26 held-out US3D tiles, \method{} reaches $2.99$\,m absolute MAE at $91.9\%$
coverage and $72.6\%$ completeness-aware accuracy at $2.5$\,m, a $46.4$-point
margin over the strongest compliant feed-forward baseline. The same model
remains the most accurate compliant feed-forward system under
cross-dataset and cross-city
transfer. Mechanism studies show that ray-consistent adaptation reduces
height-hypothesis mislocalization, that the measured camera gauge accounts for
a large share of the absolute error of pinhole-native feed-forward baselines,
and that predicted monocular uncertainty orders where multi-view evidence helps
most.

Fig.~\ref{fig:teaser} collects the three obstacles and the component of
\method{} that addresses each. We make four contributions. (1) We introduce \emph{RPC ray-field
observability} as a measurable property of feed-forward geometry features and
show how lightweight adaptation restores localization along native RPC height
rays. (2) We formulate the low-order \emph{height--datum gauge} of RPC
reconstruction, quantify its identifiability on delivered cameras, and convert
it into an explicit datum mechanism that supports multiple control regimes.
(3) We derive and test a \emph{conditional gain law} linking the value of
multi-view evidence to predicted monocular uncertainty under calibrated
fusion. (4) We introduce \bench{}, a unified absolute-frame evaluation of
eighteen systems across three generalization tiers with explicit leakage and
completeness controls.

\section{Related Work}
\label{sec:related}

\subsection{Feed-Forward Geometry Foundation Models}
Feed-forward 3D models regress dense geometry directly from images, from
pointmap-based multi-view systems~\cite{wang2024dust3r,leroy2024mast3r,wang2025vggt,keetha2026mapanything}
to monocular metric or affine-invariant priors
~\cite{yang2024depthanything,yang2024depthanything2,lin2025da3,yin2023metric3d,hu2024metric3dv2,wang2025moge2}.
Their representations build on large vision transformers
~\cite{dosovitskiy2021vit,oquab2024dinov2,ranftl2022midas}, but their geometric
parameterizations are developed for central-perspective imagery. Applying
these models to pushbroom data therefore raises a representation question:
are their features still localizable when correspondence is sampled along
RPC object-space height rays? EO-VGGT~\cite{luo2026eovggt} independently
shows the value of injecting orbital geometry into a frozen VGGT backbone.
\method{} takes up the adjacent questions of ray-field observability,
absolute elevation, and the conditional value of multi-view evidence.

\subsection{Satellite 3D Reconstruction under RPC Cameras}
Classical satellite photogrammetry operates directly on RPC geometry
~\cite{tao2001rfm,gupta1997pushbroom}. S2P~\cite{defranchis2014s2p} and
related stereo pipelines combine hand-crafted matching
~\cite{hirschmuller2008sgm,facciolo2015mgm}, while generic reconstruction pipelines such as COLMAP~\cite{schonberger2016colmap}
can be used through local perspective approximations~\cite{zhang2019vissat}.
Learned satellite MVS replaces hand-crafted matching with differentiable RPC
warping and cost volumes~\cite{yao2018mvsnet,gu2020cascade,gao2021satmvs,gao2023satmvsf},
but typically relies on satellite-specific training and dense height
supervision. Neural rendering provides a second line of work:
S-NeRF, Sat-NeRF, and EO-NeRF~\cite{derksen2021snerf,mari2022satnerf,mari2023eonerf}
optimize each scene under satellite cameras in the spirit of instant
neural fields~\cite{mueller2022instant}; EOGS and EOGS++
~\cite{aira2025eogs,bournez2025eogspp} reduce that cost with Gaussian
splatting~\cite{kerbl2023gaussian}, and
SatSplat~\cite{satsplat2026} adds online camera refinement; and
RPC-GS~\cite{wagner2026rpcgs} renders Gaussians natively through RPCs. These
methods establish strong per-scene geometry, but they do not resolve the
feed-forward question addressed here: what native multi-view RPC evidence can
identify about absolute height and how that information should interact with
a reusable foundation prior.

\subsection{Generalizable Sparse-View Reconstruction}
Generalizable Gaussian splatting predicts scene primitives in one forward
pass. pixelSplat~\cite{charatan2024pixelsplat}, MVSplat
~\cite{chen2024mvsplat}, DepthSplat~\cite{xu2025depthsplat}, HiSplat
~\cite{tang2025hisplat}, TranSplat~\cite{zhang2025transplat}, and
AnySplat~\cite{jiang2025anysplat} progressively improve sparse-view
reconstruction through learned matching, monocular priors, and hierarchical or
transformer aggregation. Satellite variants such as SkySplat
~\cite{huang2026skysplat} and SatSurfGS~\cite{chen2026satsurfgs} demonstrate
that generalizable splatting can be adapted to multi-temporal orbital imagery.
\method{} differs in both output and analysis. Its primary prediction is a
geodetically referenced height field, and it keeps observable relief, datum
freedom, and uncertainty-weighted evidence apart instead of folding their
interaction into a single learned fusion block.

\subsection{Uncertainty and Gauge Ambiguity}
Heteroscedastic regression~\cite{kendall2017uncertainties}, deep ensembles
~\cite{lakshminarayanan2017ensembles}, and post-hoc calibration
~\cite{guo2017calibration} provide established tools for predictive
uncertainty. Photogrammetric adjustment, in parallel, has long recognized the
coupling between absolute geolocation and systematic sensor error: vendor RPCs
are corrected by low-order image-space bias models
~\cite{grodecki2003block,fraser2003bias}, and bundle adjustment is defined only
up to gauge freedoms fixed by external constraints~\cite{triggs2000bundle}.
A remaining gap is a unified feed-forward treatment in which this ambiguity is
made explicit, learned uncertainty is calibrated on the same output frame, and
sparse control has a precise role. \method{} connects these ideas in a single
satellite reconstruction model.

\begin{figure*}[!t]
    \centering
    \includegraphics[width=\textwidth]{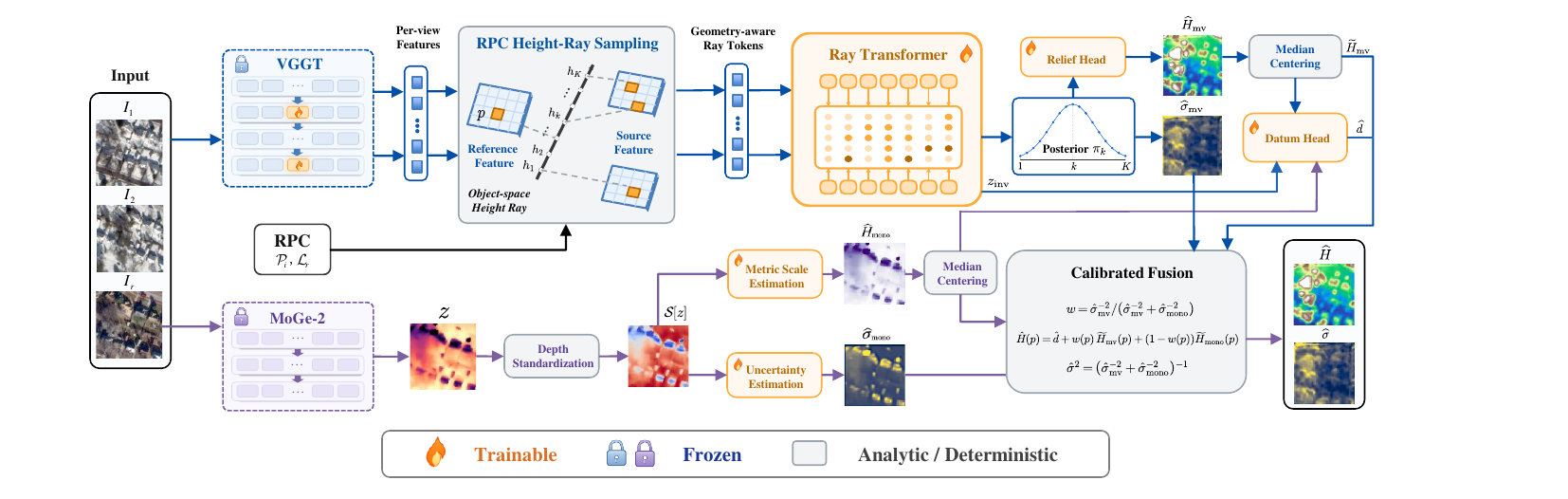}
    \caption{\textbf{GeoRay architecture.} RPC ray reasoning estimates relief,
the datum head anchors absolute level to the RPC height coordinates, and
calibrated precision fusion returns $\widehat H$ with uncertainty
$\widehat\sigma$. Lock symbols mark frozen backbones, flame symbols mark
trainable modules, and gray blocks denote parameter-free analytic operators.}
    \label{fig:pipeline}
\end{figure*}

\section{Method}
	\label{sec:method}
	
	\subsection{Problem Setting: Reconstruction in RPC Object Space}
	\label{sec:problem}
	
	A satellite acquisition provides views
	$\{(I_i,\mathcal{P}_i)\}_{i=1}^{N}$ of a geographic tile, where
	$\mathcal{P}_i$ is the rational polynomial camera of view $i$: a
	pair of rational maps between normalized geodetic coordinates
	$(\phi,\lambda,h)$ and normalized image coordinates
	$(r,c)$~\cite{tao2001rfm}. Because the pushbroom sensor assigns
	each scanline its own perspective center~\cite{gupta1997pushbroom},
	no single global pinhole epipolar geometry exists and depth along a pinhole ray is
	not a meaningful variable. The natural parameterization is instead
	the \emph{localization ray}: for a pixel $p$ in a reference view
	$r$ and a height hypothesis $h$,
	\begin{equation}
		\mathbf{x} = \mathcal{L}_r(p, h) \in \mathbb{R}^3,
		\qquad
		q_i = \mathcal{P}_i(\mathcal{L}_r(p, h)),
		\label{eq:ray}
	\end{equation}
	where $\mathcal{L}_r$ is the inverse RPC of the reference view and
	$q_i$ the reprojection into view $i$. Sweeping $h$ traces a curve
	in object space, sampled in \emph{meters of geodetic height} rather
	than in depth. The sensitivity of correspondence to height is the
	\emph{disparity rate}
	\begin{equation}
		\rho_i(p) = \left\| \tfrac{\partial q_i}{\partial h}(p, h)
		\right\|_2 \quad [\text{px/m}],
		\label{eq:disprate}
	\end{equation}
	which is nearly constant over the relevant bracket and is set by
	the convergence geometry of the acquisition; it plays the role that
	the baseline-over-depth ratio plays for pinhole stereo, and it will
	recur as the natural conditioning variable for both adaptation and
	fusion. Given $N$ views ($N{=}3$ throughout), the task is to
	predict a geodetically referenced height field
	$\widehat{H}:\Omega\to\mathbb{R}$ with per-pixel uncertainty
	$\widehat{\sigma}$ in a single forward pass, evaluated in the
	absolute vertical datum without any alignment to reference data.
	Vendor RPCs carry systematic pointing error, classically corrected
	by low-order bias
	compensation~\cite{grodecki2003block,fraser2003bias}; the coupling
	this induces between absolute height and sensor bias is formalized
	in Sec.~\ref{sec:gauge}.
	
	\subsection{Architecture Overview}
\label{sec:overview}

Fig.~\ref{fig:pipeline} summarizes the model. Multi-view images are encoded by
a frozen VGGT backbone~\cite{wang2025vggt}; trainable low-rank adapters make
its features suitable for RPC ray sampling. For each reference pixel, RPC
geometry maps a set of metric height hypotheses into the source feature maps,
forming geometry-aware ray tokens that are reasoned over by a compact Ray
Transformer. The resulting height posterior yields a multi-view relief
estimate $\widehat{H}_{\mathrm{mv}}$ and uncertainty
$\widehat{\sigma}_{\mathrm{mv}}$. In parallel, frozen MoGe-2
~\cite{wang2025moge2} is queried once on the reference image and converted to
a metric monocular relief estimate $\widehat{H}_{\mathrm{mono}}$ with
uncertainty $\widehat{\sigma}_{\mathrm{mono}}$. The datum head anchors the
absolute level, which ray matching alone cannot identify, to the height
coordinates delivered with the RPCs and corrects it with a learned residual;
calibrated fusion then combines the median-centered relief fields according to their predicted
precision.

The complete prediction is
\begin{equation}
(\widehat{H}, \widehat{\sigma})
= \mathcal{F}\!\left(\{I_i,\mathcal{P}_i\}_{i=1}^{N};\,
\theta_{\mathrm{adapt}}\right),
\label{eq:output}
\end{equation}
where $\theta_{\mathrm{adapt}}$ contains only low-rank adapters and
lightweight heads: $12.4$\,M trainable parameters versus $1.240$\,B frozen
parameters, approximately one percent of the system; both foundation backbones
remain frozen. Each obstacle has its own mechanism: RPC observability, datum
identifiability, and heterogeneous-evidence fusion.

\subsection{RPC Ray-Field Observability}
	\label{sec:observability}
	
	\subsubsection{The property}
	Let $F_i = \Phi(I_i) \in \mathbb{R}^{H'\times W'\times C}$ be
	features from a frozen encoder $\Phi$. For reference pixel $p$ and
	hypothesis heights $\{h_k\}_{k=1}^{K}$ spanning a bracket
	$[h_{\min}, h_{\max}]$, define ray-field samples and tokens
	\begin{align}
		f_{i,k}(p) &= F_i\!\left(\mathcal{P}_i(\mathcal{L}_r(p,
		h_k))\right),
		\label{eq:sampling}\\
		t_k(p) &= \mathcal{A}\!\left(f_{1,k}(p), \dots,
		f_{N,k}(p)\right),
		\label{eq:token}
	\end{align}
	where $\mathcal{A}$ aggregates across views (feature correlation
	with a learned mixing layer). A matching distribution follows from
	a scoring head $s$:
	\begin{equation}
		\pi_k(p) = \frac{\exp\!\big(s(t_k(p))\big)}
		{\sum_{k'}\exp\!\big(s(t_{k'}(p))\big)}.
		\label{eq:dist}
	\end{equation}
	We call the pair $(\Phi, s)$ \emph{ray-field observable} on a
	domain if $\pi(p)$ concentrates near the true surface height for
	textured, unoccluded pixels. The definition is operational: concentration (the entropy of
		$\pi$), localization (the distance from the mode to the true
		height), and the dependence of both on the disparity rate $\rho$
		can each be measured directly.
	Perspective pretraining does not optimize features for comparison at
	object-space points reached by rational maps under scanline-varying
	projection centers. Sec.~\ref{sec:exp-observability} shows that frozen profiles can be
		locally sharp yet systematically mislocalized. What adaptation has
		to restore is localization, for which raw peak contrast is not a
		proxy.
	
	\subsubsection{Ray-consistent adaptation}
	We restore observability with low-rank adapters~\cite{hu2022lora}
	$\Delta\Phi$ on the frozen encoder, trained specifically through
	ray-field supervision, and a compact geometry token attached to each
	height hypothesis before scoring. The implemented token records the
	hypothesis coordinate \emph{relative to the bracket} and its step,
	together with disparity-rate--step, pairwise viewing angle, and
	ground-sampling-distance cues; expressed this way, every entry of the
	token is invariant to the change of vertical origin introduced in
	Sec.~\ref{sec:gauge}. The same scoring head can then condition its evidence on the local
		acquisition geometry without replacing the pretrained representation
		or fitting a pinhole surrogate. The expected height $\sum_k \pi_k h_k$ is the height-domain
		counterpart of the soft-argmin readout of learned
		stereo~\cite{kendall2017gcnet}. A lightweight \emph{relief head}, a
		convolutional decoder over the token field, refines it by sharpening
		discontinuities and suppressing per-pixel outliers. Since the head
		reads only ray-field internals, its output inherits the invariance of
		Remark~\ref{rem:blindness}: it refines relief and leaves the datum
		untouched. The multi-view branch outputs $\widehat{H}_{\mathrm{mv}}$ and
	$\widehat{\sigma}_{\mathrm{mv}}$. Sec.~\ref{sec:exp-observability}
	isolates the representation-level effect of the adaptation and tests its
	dependence on the supplied RPC geometry.
	
	\subsection{The Height--Datum Gauge}
	\label{sec:gauge}
	
	\subsubsection{Observation model and identifiability}
	Sensor pointing error is classically modeled as a low-order
	image-space correction to each RPC~\cite{grodecki2003block,
		fraser2003bias}. Writing $\beta_i$ for the (unknown) correction
	parameters of view $i$, the reprojection consistent with a height
	field $H$ is
	\begin{equation}
		q_i(p) = \mathcal{P}_i\!\left(\mathcal{L}_r(p, H(p))\right)
		+ A_i(p)\,\beta_i,
		\label{eq:obsmodel}
	\end{equation}
	with $A_i$ the design matrix of the bias model (constant and linear
	image-coordinate terms). Multi-view matching observes only the
	$q_i$. Linearizing at a working solution, write
	$\mathbf{j}_i(p) = \partial q_i / \partial h \in \mathbb{R}^{2}$ for
	the height Jacobian of view $i$, so that $\rho_i = \|\mathbf{j}_i\|_2$
	of Eq.~\eqref{eq:disprate} is its magnitude. A vertical perturbation
	$\delta H$ displaces view $i$'s reprojection by
	$\mathbf{j}_i(p)\,\delta H(p)$, and is unobservable exactly when that
	image-space footprint is reproducible by admissible bias updates:
	\begin{equation}
		\mathbf{j}_i(p)\,\delta H(p) + A_i(p)\,\Delta\beta_i
		= \mathbf{0} \quad \forall i,p.
		\label{eq:lin}
	\end{equation}
	
	\begin{proposition}[First-order gauge and gauge fixing]
\label{prop:gauge}
Fix a tile, and let $\varepsilon>0$ denote the reprojection precision below
which matching evidence is not informative. Define
\begin{equation}
R(\delta H)=\min_{\{\Delta\beta_i\}}\max_i
\big\|\mathbf{j}_i\,\delta H+A_i\,\Delta\beta_i\big\|_{L^2(\Omega)},
\label{eq:gaugeresid}
\end{equation}
where the norm is the root mean square over tile pixels. Write
$\mathbf{j}_i(p)=\rho_i(p)\mathbf{n}_i(p)$ and let $\eta$ bound the
relative spatial variation of $\rho_i$ and $\mathbf{n}_i$ over the tile.
For the affine height family
$\mathcal{B}=\operatorname{span}\{1,x,y\}$ induced by the constant and
linear terms of the standard image-space bias model:
(i) every $b\in\mathcal{B}$ admits an admissible bias update with
$R(b)=O(\eta\,\rho\,\|b\|)$, and is therefore $\varepsilon$-unobservable
whenever this residual lies below $\varepsilon$;
(ii) in the locally affine, constant-Jacobian limit $\eta\to0$,
$\mathcal{B}$ is an exact three-dimensional null subspace of the
linearized observation model; and
(iii) control points $\{(x_j,h_j^*)\}_{j=1}^k$ fix any chosen components
of this subspace through
\begin{equation}
\widehat H^{(k)}=\widehat H+b^*,\quad
b^*=\argmin_{b\in\mathcal B}\sum_{j=1}^k
\big(\widehat H(x_j)+b(x_j)-h_j^*\big)^2,
\label{eq:gcp}
\end{equation}
which is unique whenever the corresponding control-point design has full
column rank. In particular, one point fixes the scalar sub-gauge
$\operatorname{span}\{1\}$. The accuracy of the affine approximation is a
property of the cameras, not of a reconstruction model, and is measured
directly in Sec.~\ref{sec:exp-gauge}.
\end{proposition}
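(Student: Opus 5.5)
The plan is to prove (ii) first, by an explicit bias update, and then obtain (i) by perturbing that construction; (iii) is a separate linear least-squares argument. Throughout we identify tile coordinates $(x,y)$ with reference-image coordinates through the (locally affine) localization map. The constant and linear image-coordinate terms of $A_i(p)$ span, in each image axis, the functions $\operatorname{span}\{1,x,y\}$, possibly after an affine change of variables between reference and source image coordinates that preserves this span. Thus $A_i(p)\Delta\beta_i$ ranges over all $\mathbb{R}^2$-valued affine fields on $\Omega$.

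\emph{Step 1 (exact limit, part (ii)).} Suppose $\eta=0$, so that $\mathbf{j}_i(p)\equiv\mathbf{j}_i$ is constant. Take any $b=a_0+a_1x+a_2y\in\mathcal{B}$. Then $\mathbf{j}_i b(p)$ is an affine $\mathbb{R}^2$-valued field, so we can choose $\Delta\beta_i$ with $A_i\Delta\beta_i=-\mathbf{j}_i b$. This gives Eq.~\eqref{eq:lin} exactly, for every $i$ and every $p$. Hence $\mathcal{B}\subseteq\ker$ of the linearized model.

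Dimension three follows from the linear independence of $1,x,y$ on a nondegenerate tile. For equality with the null space, the intended sense is that $\mathcal{B}$ is the part generated by the bias family: any $\delta H$ with $\mathbf{j}_i\delta H$ affine must itself be affine once some $\mathbf{j}_i\neq0$. This holds because $\delta H=\mathbf{j}_i^\top(\mathbf{j}_i\delta H)/\rho_i^2$. So the null space is exactly $\mathcal{B}$, under the natural assumption $\rho_i>0$.

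\emph{Step 2 (perturbation, part (i)).} For general $\eta$, write $\mathbf{j}_i(p)=\bar{\mathbf{j}}_i+\mathbf{e}_i(p)$, where $\bar{\mathbf{j}}_i$ is the tile mean (or the value at the tile center). By the definition of $\eta$ via $\rho_i\mathbf{n}_i$, we have $\|\mathbf{e}_i\|_\infty\le C\eta\rho$, with $\rho=\max_i\sup_p\rho_i$. Choose the same update as in Step 1 with $\bar{\mathbf{j}}_i$. The residual is then $\mathbf{e}_i(p)\,b(p)$, whose RMS norm is at most $C\eta\rho\|b\|_{L^\infty}$. Since $\mathcal{B}$ is finite-dimensional, all norms on it are equivalent, so $\|b\|_{L^\infty}\le C'\|b\|$.

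Because this particular choice of $\Delta\beta_i$ is only one candidate, the $\min$ in Eq.~\eqref{eq:gaugeresid} is no larger, and taking $\max_i$ over a finite set preserves the bound. This gives $R(b)=O(\eta\rho\|b\|)$. The $\varepsilon$-unobservability statement is then immediate from the definition of $\varepsilon$.

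\emph{Step 3 (gauge fixing, part (iii)).} Writing $b=\sum_m c_m\psi_m$ with the chosen basis functions $\psi_m$ of the components to be fixed, Eq.~\eqref{eq:gcp} becomes the least-squares problem $\min_c\|Dc-r\|^2$. Here $D_{jm}=\psi_m(x_j)$ and $r_j=h_j^*-\widehat H(x_j)$. It has the unique solution $c=(D^\top D)^{-1}D^\top r$ exactly when $D$ has full column rank. For $\operatorname{span}\{1\}$, $D$ is a single column of ones, so one point suffices and $b^*=h_1^*-\widehat H(x_1)$. Finally, the last sentence of the statement is an observation rather than a claim: $R$ depends only on $\mathbf{j}_i$ and $A_i$, which are camera quantities.

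\emph{Main obstacle.} The hard part is making Step 2 rigorous about what ``relative spatial variation'' means. If $\eta$ bounds the oscillation of $\rho_i$ and $\mathbf{n}_i$ separately, then one needs
\[
|\rho_i\mathbf{n}_i-\bar\rho_i\bar{\mathbf{n}}_i|\le|\rho_i-\bar\rho_i|+\bar\rho_i|\mathbf{n}_i-\bar{\mathbf{n}}_i|\le 2\eta\rho.
\]
I would fix this convention explicitly at the start of the proof.

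A second issue is the identification of reference coordinates with each source view's image coordinates in the bias model. I would handle it by noting that $q_i$ is, to first order, an affine function of $p$ over the tile. Any resulting curvature contributes only an additional $O(\eta)$ term, which is absorbed into the same bound.
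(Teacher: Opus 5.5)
Your proposal is correct and follows essentially the paper's own argument. In the constant-Jacobian limit you build an exact bias witness $A_i\Delta\beta_i=-\mathbf{j}_i b$. You then keep that same witness while $\rho_i\mathbf{n}_i$ varies by a relative $O(\eta)$, which gives a pointwise $O(\eta\rho|b|)$ residual and hence the RMS bound. For (iii) you use ordinary least squares on the chosen gauge basis, as the paper does.

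Two of your additions go beyond the paper. First, your exactness of the full null space at $\eta=0$ is fine under the model, but the paper deliberately claims only that $\mathcal{B}$ is \emph{a} null subspace. Second, your remark that the source-to-reference coordinate curvature is absorbed into $O(\eta)$ does not follow from the definition of $\eta$, since relief parallax is not controlled by it. Like the paper, you are in effect assuming that the bias design spans affine fields in tile coordinates.
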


\begin{proof}
At first order, a height perturbation is hidden whenever its image-space
footprint $\mathbf{j}_i\delta H$ can be represented by the admissible bias
field $A_i\Delta\beta_i$. In the constant-Jacobian limit, the constant and
linear bias terms generate an affine scalar field along the fixed height
sensitivity direction, so every $b\in\mathcal B$ has a bias witness and
therefore zero residual. Allowing $\rho_i$ and $\mathbf n_i$ to vary by a
relative $O(\eta)$ while retaining that witness perturbs the cancellation
by $O(\eta\rho_i|b|)$ pointwise, yielding the stated RMS bound. The control
problem in Eq.~\eqref{eq:gcp} is ordinary least squares on the selected
gauge basis and is unique under the stated rank condition. The proposition
	establishes this modeled low-order null subspace; it does not require that no
	additional near-null modes exist for a particular camera configuration.
\end{proof}
	
		\begin{figure*}[!tb]
		\centering
		\includegraphics[width=\textwidth]{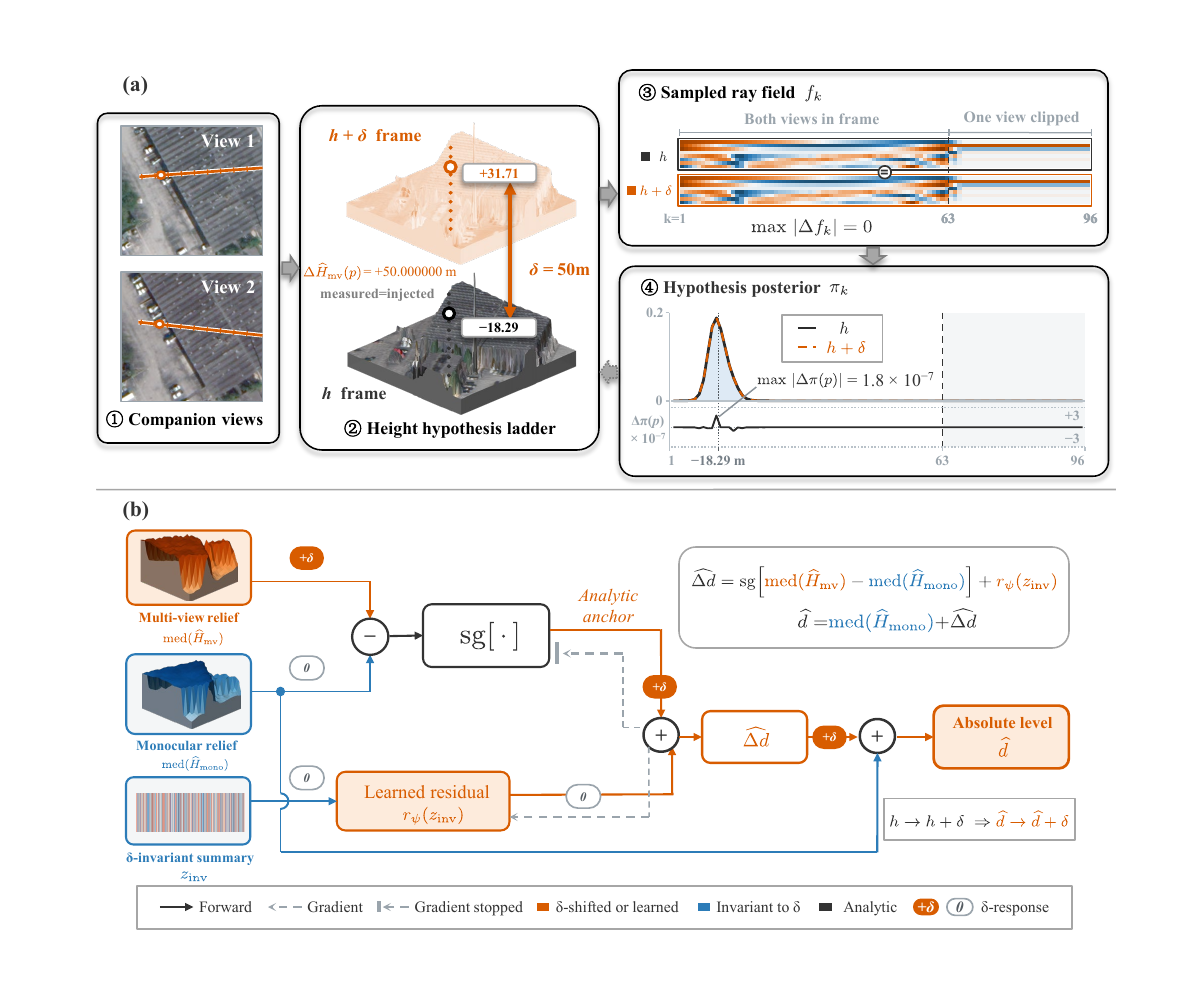}
		\caption{\textbf{Height--datum gauge and datum recovery.}
			(a) A joint vertical-origin shift leaves RPC ray samples and the posterior
			unchanged while translating the height hypotheses, demonstrating that
			ray-internal evidence cannot identify the datum. (b) The datum head combines
			an equivariant analytic anchor with a $\delta$-invariant residual to place the
				absolute level.}
		\label{fig:gauge-mechanism}
	\end{figure*}

	The experiments probe the scalar sub-gauge with one control point and the
	full affine family with ten spatially distributed points. The same correction
	basis is applied to every method, and Sec.~\ref{sec:exp-gauge} measures how
	accurately the first-order gauge describes the delivered RPCs.
	
	\subsubsection{An exact coordinate-origin symmetry}
	The scalar gauge admits an exact training-time construction. Let
	$\delta \in \mathbb{R}$ and transform a training sample by
	\begin{equation}
		H^{*} \mapsto H^{*} + \delta, \quad
		o_i \mapsto o_i + \delta \quad \forall i,
		\label{eq:shift}
	\end{equation}
	where $o_i$ is the height offset of view $i$'s RPC; the hypothesis
	bracket, anchored to the reference offset, shifts with it, and the
	images are untouched. This is a change of vertical origin
	rather than an image-space bias step: the RPC evaluates the
	normalized height $(h - o_i)/s_i$, so shifting $H^{*}$ and $o_i$
	together leaves that argument, and hence every reprojection,
	\emph{identically} unchanged, with no bias update required. The
	symmetry is therefore exact, and distinct from the first-order gauge
	of Proposition~\ref{prop:gauge}, which concerns height perturbations
	absorbed by bias updates at fixed cameras. The transformed sample is exactly
as consistent with the observations as the original: the entire ray field is
unchanged, while object-space samples translate vertically with the frame.
Because the two samples are observationally identical,
Eq.~\eqref{eq:shift} provides no learnable cue for the datum estimator;
instead, it defines an equivariance requirement: the estimate must move with
$\delta$ at unit slope although the ray field is unchanged. The shift serves three purposes: a diagnostic that exposes which pathways
carry the origin (Remark~\ref{rem:blindness},
Fig.~\ref{fig:gauge-mechanism}(a)), a constraint on where a datum estimator
may look, and a held-out check on the resulting construction
(Sec.~\ref{sec:exp-gauge}).

	\begin{remark}[Gauge blindness of ray-internal readouts]
		\label{rem:blindness}
		Under the joint shift of Eq.~\eqref{eq:shift}, every reprojection
		$q_{i,k}$, every sampled feature, and hence every quantity
		computed from the ray field alone, including its pooled summary
		statistics, is invariant to $\delta$. Any
		datum estimator that reads only ray-field internals is
		structurally blind to the gauge it must track; equivariance is
		impossible for it, not merely difficult to learn.
	\end{remark}
	
	Remark~\ref{rem:blindness} is verified end to end in
	Fig.~\ref{fig:gauge-mechanism}(a) and dictates the design of
	Fig.~\ref{fig:gauge-mechanism}(b). In the multi-view branch the hypothesis coordinates $h_k$ are the only
quantity that carries the vertical origin, so the tile level of its readout,
$\med(\widehat H_{\mathrm{mv}})$, is exactly equivariant to
Eq.~\eqref{eq:shift}, while the monocular relief of Sec.~\ref{sec:fusion} is
exactly invariant. Their difference is thus a $\delta$-sensitive scalar available in closed
form. We use it as an analytic anchor and confine
learnable capacity to a residual that is $\delta$-invariant by construction:
	\begin{equation}
		\widehat{\Delta d} = \underbrace{\sg\!\left[\med\big(\widehat{H}_{\mathrm{mv}}\big)
            - \med\big(\widehat{H}_{\mathrm{mono}}\big)\right]}_{
            \text{analytic anchor (equivariant)}}
		\; + \;
		r_{\psi}\big(z_{\mathrm{inv}}\big),
		\label{eq:datumhead}
	\end{equation}
where $\med(\cdot)$ is the tile median, $\sg[\cdot]$ is stop-gradient,
$z_{\mathrm{inv}}$ is a pooled $\delta$-invariant ray summary, and
$r_{\psi}$ is a lightweight residual MLP. The learned residual is isolated from the analytic anchor and cannot alter
its coordinate-origin response, so Eq.~\eqref{eq:datumhead} is exactly
equivariant to the shift in Eq.~\eqref{eq:shift}. The final absolute level is
\begin{equation}
    \widehat{d}=\med\big(\widehat{H}_{\mathrm{mono}}\big)
    +\widehat{\Delta d},
    \label{eq:datumlevel}
\end{equation}
Because the monocular median cancels algebraically in the composition of
Eqs.~\eqref{eq:datumhead}--\eqref{eq:datumlevel}, the forward pass reduces
to $\widehat d=\med(\widehat H_{\mathrm{mv}})+r_{\psi}(z_{\mathrm{inv}})$:
the absolute level is the tile level of the ray-field readout plus a learned
correction that can only depend on invariant ray statistics. We keep the
monocular-referenced form because it is the form in which
Fig.~\ref{fig:gauge-mechanism}(b) and the anchor loss of
Sec.~\ref{sec:objectives} are stated, and because it makes explicit that
the monocular branch contributes relief only. The analytic term fixes the gauge response; the residual corrects systematic
level error without touching it. In zero-control mode the vertical origin is the one supplied by the delivered
RPC height coordinates, and the residual
corrects the systematic part of the vendor-level error that is predictable
from invariant ray statistics; a bias-induced shift in the affine family of
Proposition~\ref{prop:gauge} that is not systematic remains unidentifiable
from ray evidence alone and must be resolved by external control.
The same trained model supports
zero control, a single scalar-height control, or sparse affine control through
Eq.~\eqref{eq:gcp}.

	\subsection{Calibrated Fusion and the Conditional Gain Law}
	\label{sec:fusion}
	
	\subsubsection{From the monocular prior to metric relief}
The monocular branch is used as a relief prior, not as an absolute-height
estimator. MoGe-2~\cite{wang2025moge2} is queried once on the reference
image. Its scalar depth field $z$ is passed through the deterministic
standardization used by its public implementation, denoted
$\mathcal{S}[z]$, and a lightweight learned scale maps that standardized
field to metric relief,
\begin{equation}
    \widehat{H}_{\mathrm{mono}}(p)=a_{\theta}\,\mathcal{S}[z](p),
    \label{eq:monoconv}
\end{equation}
where $\mathcal{S}[z]$ is centered and normalized within the tile. No
geodetic offset is taken from the monocular model; the absolute level is
owned exclusively by the datum mechanism of
Sec.~\ref{sec:gauge}. This separation is also what makes the monocular
relief invariant to the coordinate-origin shift of
Eq.~\eqref{eq:shift}.

\subsubsection{Placing two streams on one uncertainty scale}
The monocular calibration pathway predicts the metric relief scale and a
per-pixel uncertainty $\widehat{\sigma}_{\mathrm{mono}}$; its height loss
is evaluated after median centering, so it cannot absorb the absolute
datum. The multi-view pathway supplies its own per-pixel relief uncertainty
$\widehat{\sigma}_{\mathrm{mv}}$ alongside the ray-field estimate. Both are
trained with the heteroscedastic likelihood term in
Eq.~\eqref{eq:total}, and Sec.~\ref{sec:exp-gain} evaluates both the
ordering and the empirical interval scale before these uncertainties are
used for fusion.

\subsubsection{Fusion}
Fusion is performed in relief space. Let
$\widetilde{H}=H-\med(H)$ and define the multi-view precision weight
\[
w(p)=\frac{\widehat{\sigma}^{-2}_{\mathrm{mv}}(p)}
{\widehat{\sigma}^{-2}_{\mathrm{mv}}(p)+
 \widehat{\sigma}^{-2}_{\mathrm{mono}}(p)}.
\]
The absolute prediction is
\begin{equation}
\widehat{H}(p)=\widehat{d}+w(p)\widetilde{H}_{\mathrm{mv}}(p)
+\big(1-w(p)\big)\widetilde{H}_{\mathrm{mono}}(p),
\label{eq:fusion}
\end{equation}
with
$\widehat{\sigma}^{2}(p)=\big(\widehat{\sigma}^{-2}_{\mathrm{mv}}(p)+
\widehat{\sigma}^{-2}_{\mathrm{mono}}(p)\big)^{-1}$. The rule is minimal by design, which keeps the arbitration transparent: each
source contributes in proportion to its predicted precision. The conditional-independence approximation between the two streams can make
the fused uncertainty optimistic, so Sec.~\ref{sec:exp-gain} evaluates
interval coverage in both the relief and the absolute frame.

\subsubsection{The conditional gain law}
Define the pointwise gain over the calibrated monocular prior as
\begin{equation}
G(p)=\big|\widehat{d}+\widetilde{H}_{\mathrm{mono}}(p)-H^{*}(p)\big|
-\big|\widehat{H}(p)-H^{*}(p)\big|,
\label{eq:gain}
\end{equation}
which is positive where multi-view evidence improves the prediction. The
\emph{conditional gain law} states that
$\mathbb{E}[G\mid\widehat{\sigma}_{\mathrm{mono}}]$ increases with predicted
monocular uncertainty: geometry should contribute most where the prior itself
signals unreliability. The conditioning variable is available before the reconstruction error is
known, so the relation is prospective, not a post-hoc error taxonomy. Sec.~\ref{sec:exp-gain} tests the association against
a pairing-permutation null and verifies the calibration required to interpret
$\widehat{\sigma}_{\mathrm{mono}}$ as a reliability variable.

\subsection{Training Objective}
\label{sec:objectives}

All trainable components are optimized jointly:
\begin{equation}
\begin{aligned}
\mathcal{L}={}&\lambda_{\mathrm h}\mathcal L_{\mathrm h}
+\lambda_{\mathrm{ray}}\mathcal L_{\mathrm{ray}}
+\lambda_{\mathrm{mono}}\mathcal L_{\mathrm{mono}} \\
&+\lambda_{\mathrm{anc}}\mathcal L_{\mathrm{anc}}
+\lambda_{\mathrm{nll}}\mathcal L_{\mathrm{nll}}
+\lambda_{\mathrm{grad}}\mathcal L_{\mathrm{grad}}.
\end{aligned}
\label{eq:total}
\end{equation}
The terms have distinct structural roles. $\mathcal L_{\mathrm h}$ supervises
fused height, $\mathcal L_{\mathrm{ray}}$ supervises the height posterior in
Eq.~\eqref{eq:dist}, $\mathcal L_{\mathrm{mono}}$ trains the monocular metric
relief after median centering, $\mathcal L_{\mathrm{nll}}$ calibrates the two
predicted variances, and $\mathcal L_{\mathrm{grad}}$ preserves height
discontinuities. The anchor loss trains the datum residual on the tile level,
\begin{equation}
\mathcal L_{\mathrm{anc}}=\left|\widehat{\Delta d}-
\sg\!\left(\med(H^*)-\med(\widehat H_{\mathrm{mono}})\right)\right|,
\label{eq:anc}
\end{equation}
which, with the monocular medians cancelling, makes $r_{\psi}$ predict the
residual tile-level error of the multi-view readout from invariant ray
statistics. All loss weights are fixed across the reported training runs.
Two identities hold by construction under the shift of Eq.~\eqref{eq:shift};
we monitor them instead of optimizing them,
\begin{align}
\mathcal C_{\mathrm{trk}}&=\left|\left(\widehat{\Delta d}^{\,\delta}
-\widehat{\Delta d}\right)-\delta\right|,\\
\mathcal C_{\mathrm{inv}}&=\left\|\left(\widehat H^{\delta}-\delta\right)
-\widehat H\right\|_1;
\end{align}
both vanish identically for the construction of Eq.~\eqref{eq:datumhead},
so they contribute no gradient and serve as consistency assertions that
would expose any leak of the vertical origin into a nominally invariant
pathway. Sec.~\ref{sec:exp-gauge} reports their held-out value.

\section{\bench{}: Absolute-Frame Evaluation}
	\label{sec:benchmark}
	
	Existing evaluation conventions can obscure absolute satellite 3D: registration
removes the vertical placement under study, test-reference height ranges leak
information into learned predictors, and pointwise error alone can reward
partial surfaces. \bench{} evaluates public datasets under two governing principles:
predictions are scored first in the absolute geodetic frame, and all test-time
information comes from imagery, RPC metadata, or training-side statistics,
never from the evaluated reference.
	
	\begin{figure*}[!tb]
		\centering
		\includegraphics[width=\textwidth]{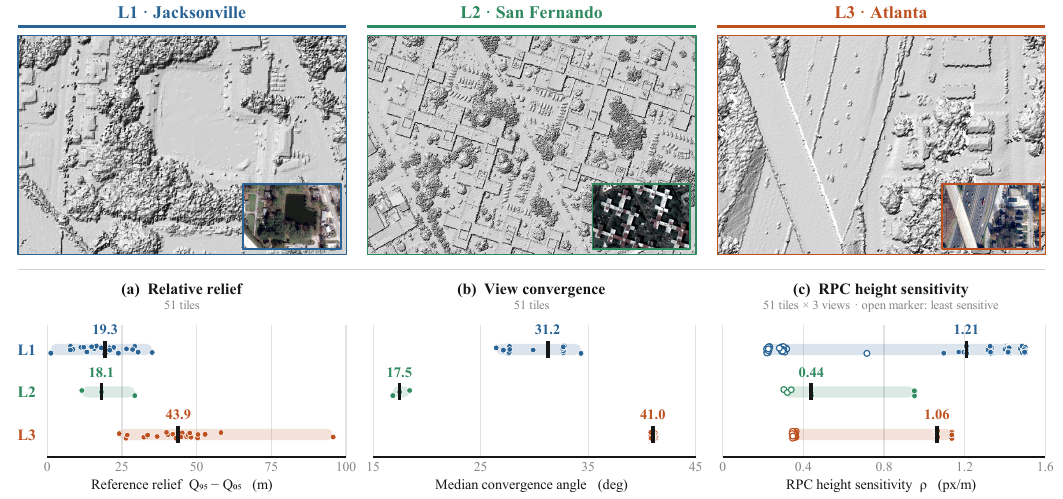}
		\caption{\textbf{Domain shifts across the three \bench{} tiers.}
Reference relief, view convergence, and RPC height sensitivity summarize the
complementary geometric shifts from L1 to L3.}
		\label{fig:benchmark}
	\end{figure*}

\subsection{Three Tiers of Generalization}
	\label{sec:tiers}
	
	\begin{table}[!tb]
		\centering
		\caption{Three \bench{} generalization tiers and their data sources.}
		\label{tab:tiers}
		\footnotesize
		\setlength{\tabcolsep}{3pt}
		\begin{tabular}{@{}llll@{}}
			\toprule
			& \textbf{L1 In-domain} & \textbf{L2 Cross-dataset}
			& \textbf{L3 Cross-city} \\
			\midrule
			Source & US3D~\cite{bosch2019us3d} & IARPA
			MVS3D~\cite{bosch2016mvs3d} & ATL-SN4~\cite{weir2019spacenet} \\
			Sensor & WorldView-3 & WorldView-3 & WorldView-2 \\
			Region & Jacksonville, Omaha & San Fernando & Atlanta \\
			Units & 234 / 26 tiles & 3 AOIs & 22 tiles \\
			Reference & US3D DSM & airborne lidar & SN4 DSM \\
			Shift & none (train dist.) & region\,+\,reference &
			sensor\,+\,city\,+\,angle \\
			\bottomrule
		\end{tabular}
	\end{table}
	
	Table~\ref{tab:tiers} separates three forms of transfer.
\textbf{L1} uses spatially disjoint US3D~\cite{christie2020geocentric,lesaux2019dfc} tiles from Jacksonville and Omaha.
\textbf{L2} evaluates the unchanged model on IARPA MVS3D, introducing a new
region and an independent airborne-lidar reference. \textbf{L3} changes sensor
generation, city, and viewing geometry simultaneously using SpaceNet-4 Atlanta.
We report the tiers separately because they probe different failure axes;
Fig.~\ref{fig:benchmark} summarizes the associated relief and acquisition
geometry.

	\subsection{Evaluation Principles}
\label{sec:contract}

Four rules define the evaluation; Table~\ref{tab:contract} states them in
operational form.

\begin{table}[!tb]
		\centering
		\caption{The \bench{} evaluation contract.}
		\label{tab:contract}
		\footnotesize
		\setlength{\tabcolsep}{5pt}
		\begin{tabular}{@{}p{2.35cm}p{5.55cm}@{}}
			\toprule
			\textbf{Requirement} & \textbf{Operational rule} \\
			\midrule
			Absolute scoring & No registration before scoring; aligned
			metrics reported as diagnostics only. \\
			\addlinespace[2pt]
			Control budgets & Every row labeled $k \in \{0, 1, 10\}$;
			identical correction family for all methods. \\
			\addlinespace[2pt]
			Leakage control & Test-time inputs enumerated per method;
			reference-derived tile quantities excluded. \\
			\addlinespace[2pt]
			Visible degeneracy & Accuracy, coverage, and relief fidelity
			reported jointly; frozen family-level output adapters. \\
			\bottomrule
		\end{tabular}
	\end{table}
 \emph{Absolute placement} is ranked before
any registration; aligned error is reported only to diagnose removable
translation. \emph{Control is explicit}: every result states whether it uses
zero, one, or ten height controls, with the same correction family available
to all methods. \emph{Test information is leakage-free}: hypothesis ranges
and normalizations are derived from RPC metadata or training statistics, not
from the evaluated reference. \emph{Completeness is visible}: absolute error
is reported together with coverage, relief fidelity, and
$\mathrm{PAG}^{\mathrm{c}}$, so partial or flattened surfaces cannot appear
strong through a favorable valid-pixel subset. Per-scene optimization is reported separately, as an accuracy--compute
reference, and is kept out of the feed-forward ranking.

	\section{Experiments}
	\label{sec:experiments}
	
	\subsection{Experimental Setup}
	\label{sec:setup}
	
	\subsubsection{Metrics}
Let $\Omega_{\mathrm{val}}$ denote reference-valid pixels and
$\Omega_{\mathrm{eval}}\subseteq\Omega_{\mathrm{val}}$ the pixels predicted by
a method. \emph{Absolute MAE} is measured directly in the geodetic frame;
\emph{aligned MAE} is a diagnostic after bounded 3-DoF translation, with the
accepted-registration fraction reported as Reg. To couple accuracy with
completeness, we use
\begin{equation}
\mathrm{PAG}^{\mathrm{c}}_{\tau}=\frac{\big|\{p\in\Omega_{\mathrm{eval}}:
|\widehat{H}_{\mathrm{reg}}(p)-H^{*}(p)|<\tau\}\big|}
{|\Omega_{\mathrm{val}}|},
\label{eq:pag}
\end{equation}
with $\tau\in\{2.5,7.5\}$\,m; unpredicted pixels count as failures.
Here $\widehat H_{\mathrm{reg}}$ is the prediction after the diagnostic
translation, which is computed for every tile; the acceptance guard decides
only which tiles enter the Aln.\ average (the fraction reported as Reg.), so
$\mathrm{PAG}^{\mathrm c}_{\tau}$ is defined for every method regardless of
Reg.
Coverage is reported using the denominator native to each benchmark tier, and
\emph{relief fidelity} is the ratio between the predicted and reference
$5$--$95$ percentile height spreads. We also report error on strong
height discontinuities. For \method{}, uncertainty quality is measured by
Gaussian negative log-likelihood, interval coverage, and AUSE from
$\widehat{\sigma}$-ordered sparsification~\cite{ilg2018uncertainty}. All
aggregates are unweighted means over tiles.
Table~\ref{tab:evalmatrix} maps every experiment to its data support,
training budget, and comparison track.

\begin{table}[!tb]
		\centering
		\caption{Mapping from each experiment to its data support,
			track, training budget, and reporting location.}
		\label{tab:evalmatrix}
		\footnotesize
		\setlength{\tabcolsep}{2.5pt}
		\begin{tabular}{@{}lllll@{}}
			\toprule
			\textbf{Experiment} & \textbf{Support} & \textbf{Track} &
			\textbf{Budget} & \textbf{Where} \\
			\midrule
			Main comparison & L1: 26 tiles & FF-MV & prod. &
			Sec.~\ref{sec:exp-main} \\
			Generalization & L2: 3 AOIs; L3: 22 & FF-MV &
			prod. & Sec.~\ref{sec:exp-generalization} \\
			Per-scene context & 6 L1 tiles; 3 AOIs & per-scene\,\dag &
			prod. & Sec.~\ref{sec:exp-perscene} \\
			Mechanisms I--III & L1 (+L2 where noted) & --- & prod. &
			V-E--V-G \\
			Ablations & L1: 26 tiles & FF-MV & red. &
			Sec.~\ref{sec:exp-ablation} \\
			Backbone swaps & L1: 26 tiles & FF-MV & red. &
			Sec.~\ref{sec:exp-ablation} \\
			\bottomrule
		\end{tabular}
	\end{table}

\subsubsection{Geodetic reference handling}
All heights are expressed in the height convention of the reference product
released with each tier, and RPCs are used as delivered, with the hypothesis
bracket anchored to their height offset. \emph{Absolute} here means scored in that frame with no alignment to the
reference; it is not a claim of independently surveyed orthometric accuracy. Any residual difference between
the RPC and reference height conventions would enter as a tile-constant
offset, i.e., the scalar sub-gauge $\operatorname{span}\{1\}$ that the
$k{=}1$ experiments of Sec.~\ref{sec:exp-gauge} measure directly; the
anchor-only absolute error on L1 ($\AnchorAbs$\,m, Table~\ref{tab:ablation})
and the zero-shot absolute error on L2 ($3.21$\,m) are well below the
geoid--ellipsoid separation at those sites, which excludes a convention
mismatch of that magnitude.

\subsubsection{Baselines and Comparability}
Seventeen baselines span classical photogrammetry
~\cite{defranchis2014s2p,hirschmuller2008sgm,facciolo2015mgm}, supervised
satellite MVS~\cite{gao2021satmvs,gao2023satmvsf}, generalizable splatting
~\cite{charatan2024pixelsplat,chen2024mvsplat,xu2025depthsplat,tang2025hisplat,zhang2025transplat,huang2026skysplat},
zero-shot foundation models~\cite{wang2025vggt,lin2025da3,jiang2025anysplat},
and per-scene optimization
~\cite{mildenhall2020nerf,derksen2021snerf,mari2022satnerf,aira2025eogs,bournez2025eogspp}.
Concurrent methods without public code at submission
~\cite{chen2026satsurfgs,wagner2026rpcgs,satsplat2026,yang2026sat3r,luo2026eovggt}
are discussed but not re-implemented.

All methods receive the same fixed view triplets and are mapped to a common
geodetic height grid. Pinhole-native models use the best-fit local perspective
approximation to the RPC~\cite{zhang2019vissat}; non-height outputs are
converted by fixed family-level adapters. Rows that fit an affine height map to
the test reference are marked as oracle context and excluded from ranking.
When ground control is provided, every method receives the same seeded control
locations and the same correction basis of Proposition~\ref{prop:gauge}.
These choices preserve each method's native inference while making absolute
placement, completeness, and control budget comparable.

\subsubsection{Implementation}
Training uses $256^2$ crops, three views, and $K{=}96$ height hypotheses for
20k AdamW iterations with cosine decay (model selection at the
best dev-validation checkpoint, 16k). Rank-16 LoRA adapters ($\alpha{=}32$) are inserted into
the attention projections and upper-block MLPs of the frozen VGGT encoder;
MoGe-2 remains frozen. AdamW (weight decay $0.01$, gradient clipping at norm
$1$) uses learning rates of $2\times10^{-4}$ for the adapters and heads and
$1\times10^{-4}$ for the LoRA parameters, with a 1k-iteration linear warmup
before the cosine decay; batches hold two crops per GPU with two-step
gradient accumulation on 3--6 GPUs (effective batch $12$--$24$). The Ray
Transformer has $4$ layers, $8$ heads, and width $256$ (feed-forward $1024$)
over the $K{=}96$ hypotheses; the datum residual is a two-layer MLP of width
$64$ and the relief head a two-level UNet of base width $32$. The loss
weights $(\lambda_{\mathrm h},\lambda_{\mathrm{ray}},\lambda_{\mathrm{mono}},
\lambda_{\mathrm{anc}},\lambda_{\mathrm{nll}},\lambda_{\mathrm{grad}})$ are
$(1,0.5,0.2,0.2,0.1,0.05)$, with $\lambda_{\mathrm{nll}}$ ramped linearly over
the first 3k iterations.
Training batches are
$\delta$-shifted with probability $0.5$ ($\delta\sim\mathcal U(-10,10)$\,m),
which leaves the objective unchanged under the exact symmetry of
Eq.~\eqref{eq:shift} and exercises the consistency identities of
Sec.~\ref{sec:objectives} during training; height brackets are anchored to
the reference-view RPC using extents fixed from training data.
Model selection uses eight spatially disjoint development tiles from the
training region. Full tiles are decoded in the geodetic frame from $256$-pixel
windows. On one A800 GPU, \method{} requires $24.0$\,s of model-forward time
($89.5$\,s end to end) per L1 tile.

\begin{table*}[!t]
		\caption{Absolute DSM reconstruction on 26 held-out US3D tiles without ground control.}
		\label{tab:main}
		\centering\small
		\setlength{\tabcolsep}{4pt}
		\begin{tabular*}{\textwidth}{@{\extracolsep{\fill}}lccccccc@{}}
			\toprule
			& \multicolumn{3}{c}{Absolute placement} &
			\multicolumn{3}{c}{Accuracy--completeness} & Relief \\
			\cmidrule(lr){2-4}\cmidrule(lr){5-7}\cmidrule(lr){8-8}
			Method & Abs.\ MAE$\downarrow$ & Aln.\ MAE$\downarrow$ &
			Reg.$\uparrow$ & PAG$^{\mathrm{c}}_{2.5}\uparrow$ &
			PAG$^{\mathrm{c}}_{7.5}\uparrow$ & Cov$\uparrow$ & RF ($100$ ideal) \\
			& (m) & (m) & (\%) & (\%) & (\%) & (\%) & (\%) \\
			\midrule
			\multicolumn{8}{@{}l}{\emph{Feed-forward multi-view (primary
					track)}} \\
			DepthSplat~\cite{xu2025depthsplat} & 25.90 & 12.16 & \best{96.2} & 10.0 & 30.8 & 88.9 & 273.7 \\
			HiSplat~\cite{tang2025hisplat} & 33.68 & 7.90 & 53.8 & 15.8 & 43.5 & 81.4 & 233.0 \\
			MVSplat~\cite{chen2024mvsplat} & 101.96 & 3.49 & 3.8 & 25.9 & 61.9 & 84.0 & 38.1 \\
			pixelSplat~\cite{charatan2024pixelsplat} & 227.71 & -- & 0.0 & 21.5 & 51.5 & 79.7 & 131.6 \\
			SatMVS~\cite{gao2021satmvs} & 8.16 & 6.26 & 73.1 & \second{26.2} & \second{62.4} & \second{90.1} & 219.0 \\
			Sat-MVSF~\cite{gao2023satmvsf} & \second{4.70} & \second{3.36} & 80.8 & 7.8 & 11.1 & 12.4 & \best{108.5} \\
			SkySplat~\cite{huang2026skysplat} & 11.73 & 8.04 & \second{92.3} & 18.9 & 47.6 & 77.9 & 187.0 \\
			TranSplat~\cite{zhang2025transplat} & 49.71 & 25.08 & 11.5 & 4.2 & 10.9 & 68.1 & 610.7 \\
			\addlinespace[3pt]
			\multicolumn{8}{@{}l}{\emph{Context: classical CPU pipeline
					($\dagger$) and test-calibrated foundation models
					($\ddagger$)}} \\
			S2P$^{\dagger}$~\cite{defranchis2014s2p} & 15.59 & 4.89 & 50.0 & 19.3 & 37.1 & 47.4 & 36.0 \\
			AnySplat$^{\ddagger}$~\cite{jiang2025anysplat} & 12.39 & 6.59 & 69.2 & 25.9 & 62.5 & 93.0 & 99.6 \\
			DA3$^{\ddagger}$~\cite{lin2025da3} & 11.11 & 5.47 & 92.3 & 28.9 & 67.4 & 92.9 & 101.9 \\
			VGGT$^{\ddagger}$~\cite{wang2025vggt} & 11.83 & 6.11 & 92.3 & 27.2 & 62.1 & 93.1 & 96.5 \\
			\midrule
			\method{} (ours) & \best{2.99} & \best{1.78} & \best{96.2} & \best{72.6} & \best{89.7} & \best{91.9} & \second{88.4} \\
			\bottomrule
		\end{tabular*}
		
		\vspace{2pt}
\begin{minipage}{0.99\textwidth}
\footnotesize
Abs./Aln.: MAE before/after bounded 3-DoF translation, Aln.\ averaged over
tiles with an accepted alignment; Reg.: fraction of such tiles. PAG$^{\mathrm c}_{\tau}$ measures the fraction of
all reference pixels both predicted and within $\tau$ after diagnostic
alignment. Cov and RF are computed in the absolute frame. $^{\dagger}$Per-scene
context; $^{\ddagger}$test-reference calibration, excluded from ranking.
\end{minipage}
\end{table*}

\subsection{Main Results}
\label{sec:exp-main}

Table~\ref{tab:main} establishes the central result: preserving absolute
placement changes the ranking of satellite 3D systems. \method{} reaches
$2.99$\,m absolute MAE and $1.78$\,m aligned MAE at $91.9\%$ coverage, while
several perspective-native models keep reasonable local relief but carry large
vertical offsets. Its completeness-aware accuracies at $2.5$ and $7.5$\,m are
$72.6\%$ and $89.7\%$ (Fig.~\ref{fig:gallery}), $46.4$ and $27.3$ points above
the strongest compliant feed-forward baseline. Sat-MVSF reaches $4.70$\,m over
only $12.4\%$ of the surface, which is why error and completeness have to be
read together.

The gain is consistent across the test set. A tile bootstrap gives a $95\%$
interval of \GeoCI\,m for \method{}'s mean absolute error, and \method{} wins
all 26 tiles against \PairBase{} in both absolute error and
$\mathrm{PAG}^{\mathrm c}_{2.5}$. The paired absolute-error margin is
\CIabs\,m at $95\%$ confidence, and the advantage persists under sparse ground
control (Table~\ref{tab:gcp}). Fig.~\ref{fig:pareto} plots the same accuracy against model-forward time per
tile, obtained in one feed-forward pass with no scene-specific
optimization.

	\begin{figure}[!tb]
		\centering
		\includegraphics[width=\columnwidth]{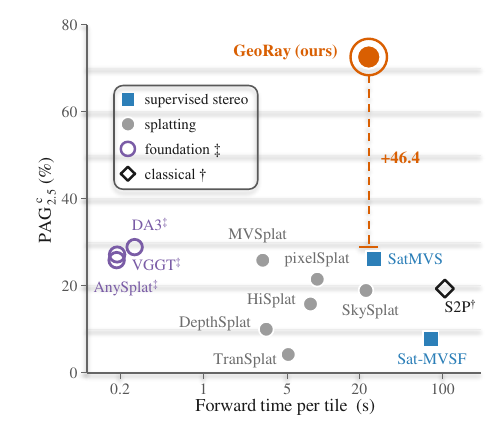}
		\caption{\textbf{Accuracy--efficiency trade-off on US3D.}
Completeness-aware accuracy is plotted against model-forward time per tile.}
		\label{fig:pareto}
	\end{figure}
	
	\begin{table}[!tb]
		\centering
		\caption{Sparse-control response on US3D under the shared affine correction model.}
		\label{tab:gcp}
		\footnotesize
		\setlength{\tabcolsep}{4pt}
		\begin{tabular*}{\columnwidth}{@{\extracolsep{\fill}}lcccc@{}}
			\toprule
			Method & $k{=}0$ & $k{=}1$ & $k{=}10$ &
			$\Delta_{\mathrm{gauge}}$ \\
			\midrule
			pixelSplat & 227.71 & 11.15 & 8.89 & $96.1\%$ \\
			MVSplat & 101.96 & 7.87 & 6.12 & $94.0\%$ \\
			HiSplat & 33.68 & 14.38 & 11.46 & $66.0\%$ \\
			DepthSplat & 25.90 & 18.06 & 13.40 & $48.3\%$ \\
			TranSplat & 49.71 & 37.96 & 32.13 & $35.4\%$ \\
			SkySplat$^{\ast}$ & 11.73 & 10.88 & 8.68 & $26.0\%$ \\
			\midrule
			\method{} (ours) & \best{2.99} & \best{2.14} & \best{1.82} &
			$39.2\%$ \\
			\bottomrule
		\end{tabular*}
		\begin{minipage}{\columnwidth}
\vspace{2pt}\footnotesize
All methods use identical control locations and the same correction basis.
$\Delta_{\mathrm{gauge}}$ is the fraction of zero-control error removed at
$k{=}10$; $^{\ast}$ denotes an RPC-native baseline.
\end{minipage}
	\end{table}
	
	\begin{figure*}[!tb]
		\centering
		\includegraphics[width=\textwidth]{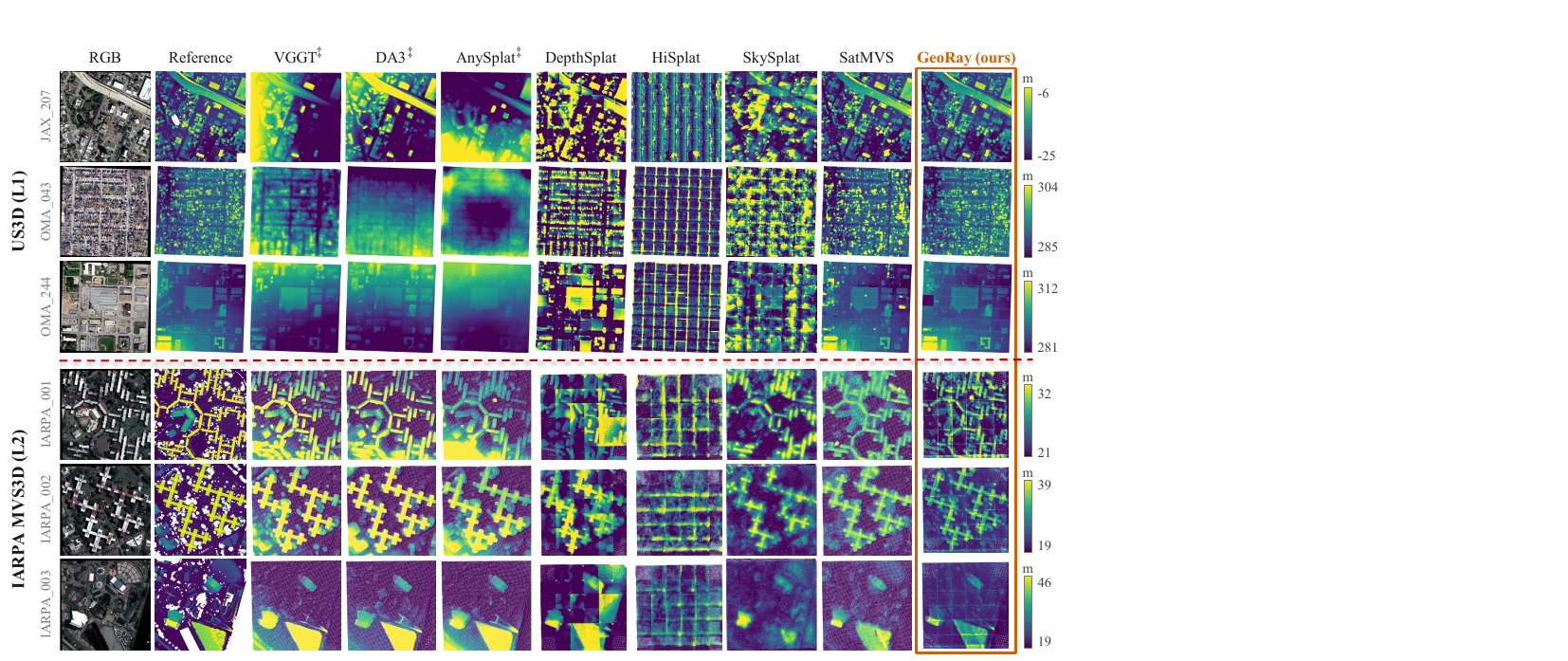}
		\caption{\textbf{Qualitative relief reconstruction on L1 and L2.}
All methods share the reference display range and per-tile median alignment so
the comparison emphasizes local relief and completeness.}
		\label{fig:gallery}
	\end{figure*}

	\begin{table*}[!t]
		\caption{Zero-shot transfer from US3D to IARPA MVS3D (L2) and ATL-SN4 (L3).}
		\label{tab:generalization}
		\centering\small
		\setlength{\tabcolsep}{3.5pt}
		\begin{tabular*}{\textwidth}{@{\extracolsep{\fill}}lcccccccccc@{}}
			\toprule
			& \multicolumn{5}{c}{IARPA MVS3D (L2)} &
			\multicolumn{5}{c}{ATL-SN4 (L3)} \\
			\cmidrule(lr){2-6}\cmidrule(lr){7-11}
			Method & Abs.$\downarrow$ & Aln.$\downarrow$ & Reg.$\uparrow$
			& Cov$^{\mathrm{g}}\uparrow$ & RF & Abs.$\downarrow$ & Aln.$\downarrow$ &
			Reg.$\uparrow$ & Cov$^{\mathrm{g}}\uparrow$ & RF \\
			& (m) & (m) & (\%) & (\%) & (\%) & (m) & (m) & (\%) & (\%) &
			(\%) \\
			\midrule
			\multicolumn{11}{@{}l}{\emph{Feed-forward multi-view (primary
					track)}} \\
			DepthSplat & 7.41 & 6.18 & \best{100.0} & \second{87.6} & 180.3 & 58.67 & 38.72 & \best{90.9} & 74.8 & 352.0 \\
			HiSplat & 17.72 & 6.91 & \second{33.3} & 82.6 & 123.4 & 99.73 & 28.59 & 13.6 & 55.9 & 296.6 \\
			MVSplat & 60.12 & 4.36 & \second{33.3} & 84.5 & 37.2 & 375.25 & -- & 0.0 & 80.1 & 52.1 \\
			pixelSplat & 6.58 & -- & 0.0 & 86.8 & 84.2 & \second{45.47} & \second{23.87} & 9.1 & 60.8 & 204.2 \\
			SatMVS & 5.66 & 5.19 & \best{100.0} & 79.2 & 103.1 & 115.11 & 49.10 & 31.8 & 71.7 & 360.4 \\
			Sat-MVSF & \second{3.37} & 3.33 & \best{100.0} & 39.1 & \second{100.8} & 50.01 & -- & 0.0 & 0.2 & 453.8 \\
			SkySplat & 5.07 & \second{3.25} & \best{100.0} & \best{96.0} & \best{100.6} & 69.48 & 37.11 & \second{63.6} & \second{83.5} & 283.2 \\
			TranSplat & 59.22 & 7.70 & \second{33.3} & 82.7 & 56.2 & 373.14 & -- & 0.0 & 77.3 & \second{59.1} \\
			\addlinespace[3pt]
			\multicolumn{11}{@{}l}{\emph{Context: per-scene ($\dagger$) and
					test-calibrated ($\ddagger$)}} \\
			S2P$^{\dagger}$ & 2.02 & 1.59 & 100.0 & 59.5 & 100.8 & 6.63 & 6.50 & 86.4 & 21.7 & 84.7 \\
			AnySplat$^{\ddagger}$ & 15.39 & 5.73 & 100.0 & 78.9 & 189.2 & 17.38 & 14.47 & 40.9 & 93.9 & 68.5 \\
			DA3$^{\ddagger}$ & 13.39 & 5.25 & 100.0 & 78.2 & 197.3 & 14.52 & 12.68 & 59.1 & 93.5 & 64.5 \\
			VGGT$^{\ddagger}$ & 13.79 & 5.59 & 100.0 & 78.3 & 199.6 & 17.06 & 13.83 & 50.0 & 93.9 & 67.8 \\
			\midrule
			\method{} (ours) & \best{3.21} & \best{2.86} & \best{100.0} & 86.7 & 81.0 & \best{23.88} & \best{15.74} & 45.5 & \best{90.7} & \best{118.7} \\
			\bottomrule
		\end{tabular*}
		\begin{minipage}{\textwidth}
\vspace{2pt}\footnotesize
Columns follow Table~\ref{tab:main}; Cov$^{\mathrm g}$ is grid coverage using
each tier's common scorer. Aligned MAE is reported only for tiles with an
accepted bounded translation; ``--'' indicates that none is accepted.
\end{minipage}
	\end{table*}

	\begin{figure*}[!tb]
		\centering
		\includegraphics[width=\textwidth]{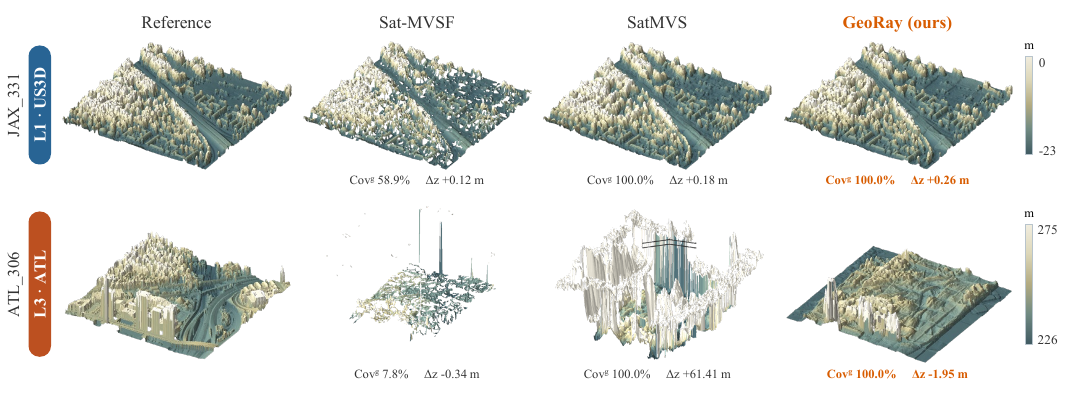}
		\caption{\textbf{Unregistered surfaces in the geodetic frame.}
Representative in-domain and cross-city reconstructions are rendered at their
predicted absolute elevations with a shared vertical scale within each row.}
		\label{fig:recon3d}
	\end{figure*}

\subsection{Generalization across Tiers}
	\label{sec:exp-generalization}
	
	Table~\ref{tab:generalization} evaluates unchanged models under two
out-of-distribution shifts. On L2, \method{} achieves the best absolute and
aligned error among compliant feed-forward methods ($3.21/2.86$\,m) at $86.7\%$
coverage. Sat-MVSF is close in absolute error ($3.37$\,m) but covers only
$39.1\%$, whereas SkySplat preserves more coverage but larger error. The L1 advantage thus survives a new region and an independent lidar
reference.

L3 jointly changes city, sensor, and acquisition geometry. Under this stronger
shift, \method{} retains $90.7\%$ coverage and the best compliant feed-forward
absolute/aligned error ($23.88/15.74$\,m), while supervised satellite MVS and
perspective-native systems either lose coverage, inflate relief, or retain
large datum error. Figs.~\ref{fig:recon3d} and~\ref{fig:gen} show the same pattern visually:
accuracy degrades under the shift, while the reconstructed surface stays
substantially more complete and better placed in the geodetic frame. We keep
the three tiers separate, since averaging distinct shifts into one number
would hide exactly this structure.

	\begin{figure}[!tb]
		\centering
		\includegraphics[width=\linewidth]{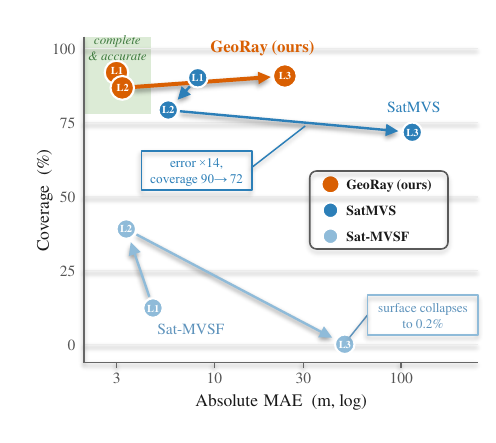}
		\caption{\textbf{Accuracy--coverage trajectories across the three tiers.}
Paths from L1 to L3 separate error growth at retained coverage from apparent
accuracy caused by coverage collapse.}
		\label{fig:gen}
	\end{figure}

	\subsection{Comparison with Per-Scene Reconstruction}
\label{sec:exp-perscene}

Per-scene optimization refits every scene independently and so provides an
accuracy--compute reference (Table~\ref{tab:perscene}). On the common
six-scene US3D subset,
\method{} reaches $2.49$\,m absolute error in $24.0$\,s of model-forward time
($89.5$\,s end to end), compared with $2.89$\,m in $1964$\,s for EOGS, while covering more of the
surface. On the three IARPA AOIs, EOGS is more accurate ($1.68$ versus
$3.21$\,m) but requires two orders of magnitude more computation. \method{} is thus a reusable feed-forward model that complements
scene-specific optimization, which stays preferable wherever maximum
per-scene accuracy justifies the extra compute.

\begin{table*}[!tb]
		\caption{Comparison with per-scene reconstruction on the common nine-scene subset.}
		\label{tab:perscene}
		\centering\small
		\setlength{\tabcolsep}{3.5pt}
		\begin{tabular*}{\textwidth}{@{\extracolsep{\fill}}lcccccccccc@{}}
			\toprule
			& \multicolumn{5}{c}{US3D subset (6 scenes)} &
			\multicolumn{5}{c}{IARPA (3 AOIs)} \\
			\cmidrule(lr){2-6}\cmidrule(lr){7-11}
			Method & Abs.$\downarrow$ & Aln.$\downarrow$ & Cov$\uparrow$
			& RF & Time$\downarrow$ & Abs.$\downarrow$ &
			Aln.$\downarrow$ & Cov$\uparrow$ & RF & Time$\downarrow$ \\
			& (m) & (m) & (\%) & (\%) & (s) & (m) & (m) & (\%) & (\%) &
			(s) \\
			\midrule
			S2P$^{\dagger}$~\cite{defranchis2014s2p} & 15.96 & 4.90 &
			45.5 & 30.9 & 123 & 2.02 & 1.59 & 59.5 & 100.8 & 32 \\
			NeRF$^{\dagger}$~\cite{mildenhall2020nerf} & 11.57 & 3.40 &
			99.4 & 7.8 & 16435 & 6.92 & 3.83 & 96.8 & 129.9 & 12035 \\
			S-NeRF$^{\dagger}$~\cite{derksen2021snerf} & 5.99 & 4.93$^{\ast}$ &
			99.3 & 57.6 & 28240 & 7.91 & 4.34 & 96.8 & 46.9 & 30721 \\
			Sat-NeRF$^{\dagger}$~\cite{mari2022satnerf} & 5.39 & 2.10$^{\ast}$ &
			99.3 & 72.8 & 30426 & 2.12 & 1.86 & 96.6 & 100.2 & 32598 \\
			EOGS$^{\dagger}$~\cite{aira2025eogs} & 2.89 & 2.89 & 32.7 &
			97.8 & 1964 & 1.68 & 1.47 & 100.0 & 99.9 & 1358 \\
			EOGS++$^{\dagger}$~\cite{bournez2025eogspp} & 3.01 & 2.44 &
			32.7 & 85.8 & 2781 & 1.98 & 1.39 & 100.0 & 99.0 & 1388 \\
			\midrule
			\method{} (ours) & 2.49 & 1.68 & 85.7 & 89.6 & 24.0\,/\,89.5 & 3.21 &
			2.86 & 86.7 & 81.0 & 9.1 \\
			\bottomrule
		\end{tabular*}
		
		\vspace{2pt}
		\begin{minipage}{0.99\textwidth}
\footnotesize
Abs. averages all scenes; Aln. averages scenes with accepted bounded
registration. Coverage follows each method's native output support; RF uses the
common valid intersection. Time is per-scene wall-clock on identical hardware; for \method{} the US3D
entry gives model-forward\,/\,end-to-end time and the IARPA entry
model-forward time. $^{\ast}$Aln.\ aggregated over the single US3D scene
passing the registration guard; the other five are excluded as unregistered.
\end{minipage}
	\end{table*}
\begin{figure*}[!t]
		\centering
		\includegraphics[width=\textwidth]{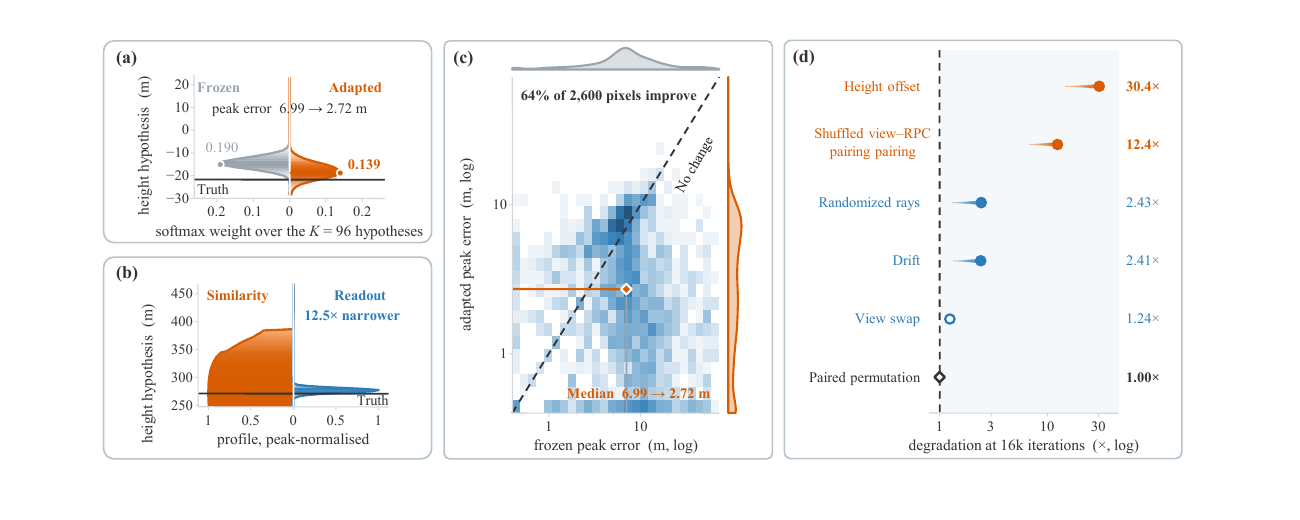}
		\caption{\textbf{Observability along RPC height rays.}
			Matching localization, posterior concentration, and camera
			counterfactuals isolate the effect of ray-consistent adaptation.}
		\label{fig:observability}
	\end{figure*}

\subsection{Mechanism I: RPC Ray-Field Observability (C1)}
	\label{sec:exp-observability}
	
	Frozen perspective-pretrained features are not reliably observable
	along RPC height rays, and lightweight ray-consistent adaptation
	restores the property. This section characterizes the intervention at the feature
		level, whereas the ablation table reports it as an end-to-end metric
		delta, and the two readings answer different questions. All mechanism analyses
		here and in the two sections that follow use the production model
		(Table~\ref{tab:evalmatrix}); the reduced-budget protocol applies
		only to the ablations.
	
	\emph{Protocol.} On a fixed, deterministically sampled set of
	$2{,}600$ pixels spanning building edges, rooftops, and ground across
	all test tiles, we trace each pixel's RPC height ray and record the
	cross-view feature-cosine profile over the $K$ hypotheses, under an
	identical similarity metric and normalization for frozen and
	adapted features; the ray-readout posterior is recorded separately,
	so feature-level and head-level effects are never conflated.
	
	\emph{Findings} (Fig.~\ref{fig:observability}(a)--(c)). Frozen profiles
	peak a median $6.99$\,m from the true height at contrast $0.192$;
	adaptation moves the peak to a median $2.72$\,m (P90 $8.62$\,m), a
		$2.6{\times}$ localization gain, while the raw contrast falls to
		$0.138$. Higher contrast in the frozen features is thus no evidence of
		better localization: it accompanies larger per-pixel DSM error
		(Spearman $r_{\mathrm{s}} = 0.36$). The ray readout supplies the
		final posterior sharpness, so adaptation corrects localization and
		the readout sharpens the posterior.
	
	\emph{Counterfactuals} (Fig.~\ref{fig:observability}(d)). Five RPC corruption
	families (shuffled view--RPC pairing, height offset, drift,
	randomized rays, and view swap) each break the correspondence
	between images and cameras; reconstruction degrades by
	$1.2$--$30.4{\times}$, and for four of the five the degradation
	grows monotonically over training. A sixth family permutes the views
	\emph{together with} their RPCs, so every image--camera pair survives
	and the geometry is untouched: it costs nothing
		($1.00{\times}$). This paired-view permutation is the control the analysis
			requires: what degrades reconstruction is corrupted
			camera geometry, not perturbed input, so the restored competence
			is geometric and not an effect of memorized appearance.

	\begin{figure*}[!t]
		\centering
		\includegraphics[width=\textwidth]{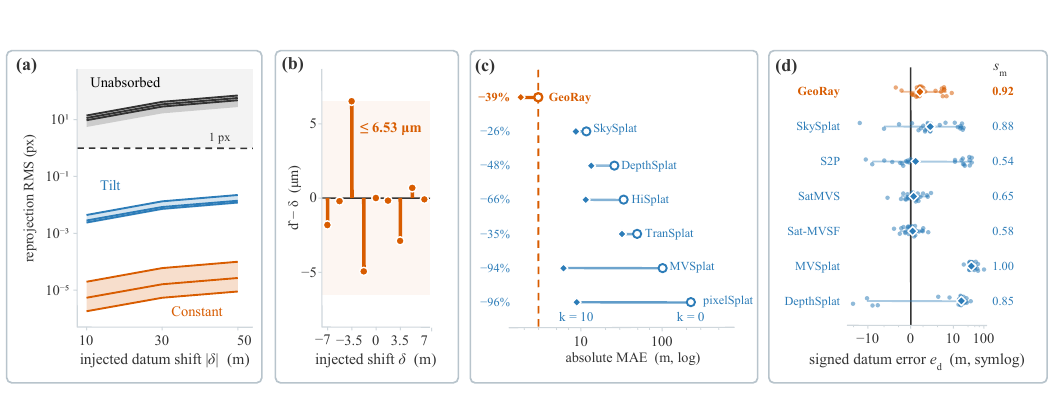}
		\caption{\textbf{The height--datum gauge, measured.}
			Injected-shift response, per-camera gauge spectra, and
			sparse-control behavior establish the low-order structure of
			absolute error.}
		\label{fig:gaugeexp}
	\end{figure*}

\subsection{Mechanism II: The Height--Datum Gauge (C2)}
	\label{sec:exp-gauge}
	
	The gauge behaves as constructed: the datum estimate tracks an
		injected shift with slope $1.00$, a single control point removes most
		of the residual scalar datum error, and what remains is a low-order
		field. We establish each claim in turn.
	
	\emph{Validity of the first-order gauge.} Proposition~\ref{prop:gauge}
	is an approximation, and its quality is a property of the cameras, so
	we measure it on them with no model in the loop. For each tile and
	each view we inject the modes of $\mathcal{B}$ at amplitudes
	$\delta \in \{\pm 10, \pm 30, \pm 50\}$\,m, form the exact
	reprojection displacement field, and fit the admissible bias update of
	Eq.~\eqref{eq:obsmodel} to it in least squares; what is left is $R$ of
	Eq.~\eqref{eq:gaugeresid}. At $\delta = 50$\,m the constant mode
	leaves $R = \GaugeResidPx$\,px against a displacement field of
	$\GaugeDispPx$\,px, and the affine modes leave at most
	$\GaugeResidAff$\,px at the worst tile. Divided by the disparity rate, the
	\emph{constant-mode} residual is $\GaugeResidM$\,m of observable
	height out of the $50$\,m injected: multi-view evidence, read at any
	precision these cameras support, constrains a part in $10^{6}$ of a
	scalar datum shift, and the tilt modes, whose residual is two orders
	larger, remain far below any usable matching precision. That the
	number is small is only meaningful against a control, so we repeat
	the fit for a perturbation \emph{outside} $\mathcal{B}$, which the
	bias model has no witness for: it leaves a residual five to six
	orders of magnitude larger on every tier
	($1.2{\times}10^{5}$ to $1.7{\times}10^{6}$ times the constant-mode
	value). The separation is wide enough to identify the affine family as a genuine
near-null direction of these cameras and not an artifact of the fit. The measured residual sits far below the $O(\eta)$ bound of
Proposition~\ref{prop:gauge}(i) because the
	bias design absorbs the affine part of the variation of
	$\mathbf{j}_i$ as well, leaving second-order terms; the bound is an
	upper bound and these cameras are better behaved than it requires.
	The same measurement on the strongly oblique cross-city cameras
	($R = \GaugeResidLthree$\,px) is what
	Sec.~\ref{sec:exp-failure} reads its residual datum field against.
	
	\emph{Gauge tracking.} We vary the RPC height origin over the tested
$\delta\in[-7,7]$\,m sweep while leaving the images fixed. The datum
estimate follows with unit slope to numerical precision, with maximum
residual below $6.53\,\mu$m (Fig.~\ref{fig:gaugeexp}(b)). The same equivariance
holds for the full prediction, not only the datum head: over the same sweep the
fused height field follows the injection with slope $0.9999$, its
displacement from the shifted reference staying within $0.008$\,m at the
$95$th percentile. The monocular relief is unchanged to the reported
precision, as required by the coordinate-origin separation in
Sec.~\ref{sec:gauge}. The experiment is a direct held-out verification of the constructive
equivariance and reports the held-out value of the consistency identities of
Sec.~\ref{sec:objectives}; it makes no extrapolation claim beyond the
$\pm10$\,m training-shift interval.

\emph{Control response, and what it reveals about the field.}
	Table~\ref{tab:gcp} applies the correction family of
		Proposition~\ref{prop:gauge} identically to every method. The last column
		estimates how much of each system's absolute error is attributable to the
		low-order datum component rather than the remaining relief error.
	The two systems with the largest absolute errors, both pinhole-native, place
$94$ and $96\%$ of that error in the datum ($101.96 \to 6.12$ and
$227.71 \to 8.89$\,m). Most of their error is thus a matter of absolute placement, not local
relief, exactly as expected when the low-order gauge dominates the
unregistered prediction. The
	RPC-native baseline leaves the smallest share ($26.0\%$), consistent with
		its explicit treatment of RPC geometry. TranSplat shows the opposite
		pattern: only about one third of its error is removable, indicating that
		relief error dominates. Since the reported share is a ratio, it describes error
		composition and not error magnitude. The gauge belongs to the
		observation model and is inherited by every method: absolute
		satellite reconstruction is limited by identifiability as much as by
		model capacity. Even after correction, the best-responding baseline, \BaselineKtenName{},
ends at $\BaselineKten$\,m versus \method{}'s zero-control $2.99$\,m. The
ordering is also unchanged on both shift tiers, where HiSplat reaches
$\BaselineShift$\,m under the same ten points.
	
	\method{}'s own response separates the two control regimes as the
	analysis predicts. One point fixes the scalar gauge and recovers
	$70.7\%$ of the remaining absolute--aligned gap
	($2.992 \to 2.137$\,m against the $1.782$\,m aligned floor); ten constrain the affine field
	($1.82$\,m). Under transfer the regimes diverge: on L2, where the
	zero-control datum gap is already $0.35$\,m and below the noise of a
	single control height, one point cannot help ($3.21 \to 3.99$\,m)
	while ten distributed points do ($3.13$\,m), whereas on L3 the scalar correction provides limited improvement for the
smooth low-order field of Sec.~\ref{sec:exp-failure}
($23.88 \to 22.09$\,m), while the affine correction reduces the error to
$16.94$\,m, recovering $74\%$ of the affine-oracle gain.

\emph{Population view.} Per-tile signed datum errors separate the
	families (Fig.~\ref{fig:gaugeexp}(d)): pinhole-derived methods
	scatter over tens of meters, supervised RPC-MVS depends on its
	height prior, and \method{} concentrates near zero.

	\begin{figure*}[!t]
		\centering
		\includegraphics[width=\textwidth]{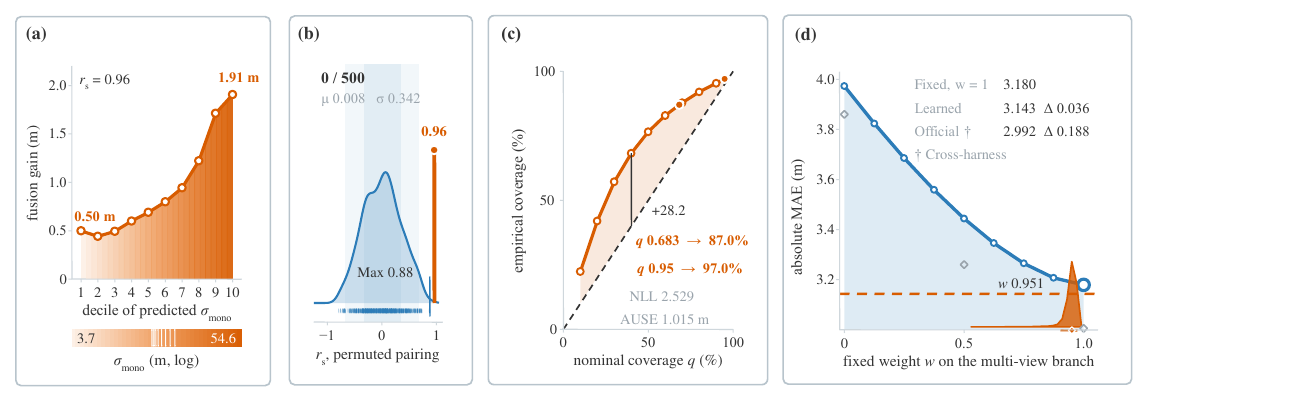}
		\caption{\textbf{Calibration and the conditional gain law.}
			Reliability, interval coverage, permutation and fixed-weight
			controls test where multi-view evidence should dominate.}
		\label{fig:gainlaw}
	\end{figure*}

\subsection{Mechanism III: Calibration and the Conditional Gain
		Law (C3)}
	\label{sec:exp-gain}
	
	The fusion of Eq.~\eqref{eq:fusion} requires uncertainties that are
		useful both for ranking and for scale. On held-out data, predicted
uncertainty yields $\mathrm{NLL}=2.53$ and sparsification error
$\mathrm{AUSE}=1.02$\,m~\cite{ilg2018uncertainty}
(Fig.~\ref{fig:gainlaw}(c)), with conservative empirical interval coverage
and spatial agreement with realized error. These results license the use of predicted uncertainty as the conditioning
variable below.
	
	\emph{Calibration.} Sparsification evaluates the \emph{ordering} of
		predicted uncertainty; calibration of its numerical scale is a separate
		question, and that scale enters the fusion weights of
		Eq.~\eqref{eq:fusion}. We report interval coverage against
		nominal~\cite{gneiting2007calibration,kuleshov2018calibrated},
	read on the median-centered residual that $\widehat{\sigma}$ models,
	with the absolute-frame result reported alongside it so that per-tile datum
error remains visible. The intervals are conservative, not exact. The nominal $68.3\%$ and $95.0\%$ levels cover
	$\CovOne\%$ and $\CovTwo\%$ at a variance-scaling factor
	$\mathrm{RMS}(e/\widehat{\sigma}) = \VarScale$, indicating conservative
		scale calibration rather than overconfidence. The excess coverage is
		concentrated at the narrower interval and nearly vanishes at the wider one.
		The pattern points to non-Gaussian residuals, more sharply
		peaked near zero but heavier tailed, so a single variance rescaling
		cannot match both coverage levels at once. The calibration head acts on the fusion weights and on the ordering
	(its removal costs $0.031$--$0.198$\,m across the tiers, below) rather
	than on the interval scale, which the heteroscedastic likelihood already
	sets ($\VarScaleU$, at $\CovOneU\%$ and $\CovTwoU\%$, without the head). In the absolute frame, the same intervals cover $\CovOneAbs\%$ and
$\CovTwoAbs\%$. The lower coverage reflects the datum component that
$\widehat{\sigma}$ does not model and that Fig.~\ref{fig:gaugeexp}(d)
reports across the test population. The measured
		over-coverage is conservative despite the shared-image dependence between the two streams, so the independence approximation
does not create an overconfident interval in practice. A datum-consistent
sweep over fixed global weights then provides a direct reference for the
learned gate (Fig.~\ref{fig:gainlaw}(d)). Its best fixed
choice lies at the boundary, $w=1$, with $3.180$\,m MAE; under the same
protocol, the learned gate reaches $3.143$\,m, a $0.036$\,m gain.
The sweep scores every $w$ on the joint valid-pixel mask of the three
streams with the per-tile datum held fixed, whereas the main evaluation protocol scores the fused raster on its own
support; the same predictions therefore carry a constant $\approx0.19$\,m
offset at every $w$. The reported $2.992$\,m result follows the main-table protocol and is kept
separate from the fixed-weight sweep. The learned weight distribution is concentrated near
	the same boundary (median $0.951$, mode $0.95$), so the learned weights remain
	close to the fixed-weight optimum while retaining the pixelwise variation
	tested by the conditional law below.
	
	\emph{The conditional law.} Stratifying the fusion gain of
	Eq.~\eqref{eq:gain} by \emph{predicted} monocular uncertainty
	yields a profile that rises with predicted uncertainty from
	$0.50$\,m in the most confident decile to $1.91$\,m in the least, a
	$3.8{\times}$ rise with rank correlation $0.96$ across the ten
	deciles (Fig.~\ref{fig:gainlaw}(a)). A permutation null tests whether this ordering could arise from the
	marginal distributions alone. Each of $\NullDraws$ repetitions
	randomly permutes the pixel pairing between predicted monocular uncertainty
and realized fusion gain over the full leakage-controlled pixel set,
	then recomputes the decile profile and its rank correlation. The null
	centers at $\NullMean \pm \NullSd$, no draw reaches the observed
	$0.96$ ($0/\NullDraws$ exceed; plus-one Monte-Carlo $p=1/(\NullDraws+1)\approx0.002$;
	Fig.~\ref{fig:gainlaw}(b)). The uncertainty--gain ordering is therefore a population-level
		association and not an artifact of the decile marginals.
	
	\emph{Disparity rate acts at acquisition scale.} Binning the same pixels by
$\rho_{\mathrm{eff}} = (\sum_i \rho_i^{2})^{1/2}$, the precision with
which a triplet resolves height, initially suggests the expected geometric
trend, but that trend disappears after controlling for tile identity. An analysis of variance places essentially
	all of the variation of $\rho_{\mathrm{eff}}$ between tiles and none
	within them, so binning pixels by it bins them by tile identity;
	removing per-tile means leaves the marginal at $\GainRhoWithin$\,m
		and non-monotonic. We therefore state the law on predicted uncertainty alone. The
			controlled result is still informative: the near-constancy of
			$\rho_{\mathrm{eff}}$ within a tile supports the magnitude
			component of the tile-scale regime assumed in
			Proposition~\ref{prop:gauge}. The law's structure transfers across domains: under the L2
	shift the multi-view gain stays concentrated in the least-confident
	pixels (Spearman $0.98$), with a top-decile gain of $1.0$\,m
	against $1.9$\,m in domain\,---\,the reduced magnitude reflecting
	that the multi-view geometry itself degrades under the same shift.
	L2 also has the weakest view convergence of the three tiers
($17.5^{\circ}$ median versus $31.2^{\circ}$ and $41.0^{\circ}$,
Fig.~\ref{fig:benchmark}(b)), providing a direct geometric explanation for
the reduced gain. A further consequence is that the contribution of the
			fusion components changes as geometry weakens.
		Switching the monocular stream off costs $0.014$, $0.056$, and $0.554$\,m
		in domain, on L2, and on L3; bypassing calibration costs $0.031$, $0.198$,
		and $0.134$\,m. Relative to each tier's error scale, the monocular stream
		increases steadily in importance ($0.5$, $1.7$, $2.3\%$), whereas the
		calibration effect is largest on L2 ($1.0$, $6.2$, $0.6\%$ across L1--L3).
		This pattern is consistent with L2 having the weakest convergence while
		retaining a comparatively accurate surface; on L3, large residual errors
		dominate the uncertainty-scale effect. Both components therefore matter most where multi-view evidence is weak, as
the conditional law predicts. What the experiments establish is the
conditional relationship, not a uniform mean shift. Since the law is stated on
quantities available before reconstruction, it can also signal when additional
multi-view acquisition is likely to pay off.

	\begin{table}[!tb]
		\caption{Ablations on the 26 held-out US3D tiles.}
		\label{tab:ablation}
		\centering\footnotesize
		\setlength{\tabcolsep}{3pt}
		\begin{tabular}{@{}lcccc@{}}
			\toprule
			Configuration & Abs.$\downarrow$ &
			PAG$^{\mathrm{c}}_{2.5}\uparrow$ & Cov$\uparrow$ & RF \\
			& (m) & (\%) & (\%) & (\%) \\
			\midrule
			\multicolumn{5}{@{}l}{\emph{Inference ablations on the final
					model}} \\
			\method{} (final) & 2.99 & 72.6 & 91.9 & 88.4 \\
			datum head, anchor only ($r_{\psi} \equiv 0$) & \AnchorAbs &
			\AnchorPag & \AnchorCov & \AnchorRf \\
			w/o datum head ($\widehat{\Delta d} \equiv 0$) & 150.84 & 47.1 &
			89.8 & 71.9 \\
			uncalibrated fusion & 3.02 & 71.9 & 92.0 & 86.2 \\
			two views & 3.34 & 68.9 & 92.0 & 84.8 \\
			\addlinespace[2pt]
			\multicolumn{5}{@{}l}{\emph{Fusion-arm checks (inference-time switches)}} \\
			multi-view only & 3.01 & 72.5 & 91.9 & 89.9 \\
			monocular only & 3.86 & 57.5 & 92.5 & 71.8 \\
			fixed-weight model ($w \equiv 0.5$) & 3.26 & 67.6 & 92.3 & 78.6 \\
			\addlinespace[2pt]
			\multicolumn{5}{@{}l}{\emph{Matched-compute retraining}} \\
			Full model (matched compute) & 3.59 & 69.1 & 91.7 & 86.2 \\
			w/o ray-field adaptation & 20.47 & 26.4 & 90.7 & 115.4 \\
			multi-view only (retrained) & 7.45 & 42.7 & 92.1 & 70.1 \\
			\bottomrule
		\end{tabular}
	\end{table}

\begin{table}[!tb]
		\caption{Monocular-prior swap under matched compute.}
		\label{tab:backbones}
		\centering\footnotesize
		\setlength{\tabcolsep}{4pt}
		\begin{tabular}{@{}llcc@{}}
			\toprule
			Monocular prior & Budget & Abs.\ MAE$\downarrow$ &
			PAG$^{\mathrm{c}}_{2.5}\uparrow$ \\
			\midrule
			MoGe-2~\cite{wang2025moge2} & production & 2.99 & 72.6 \\
			MoGe-2 & matched & 3.59 & 69.1 \\
			Depth-Anything-V2~\cite{yang2024depthanything2} & matched & 3.58 & 69.0 \\
			\bottomrule
		\end{tabular}
	\end{table}

\subsection{Ablations and Sensitivity}
\label{sec:exp-ablation}

Table~\ref{tab:ablation} separates large structural effects from local design
choices. Removing ray-field adaptation in matched retraining raises absolute
MAE from $3.59$ to $20.47$\,m, confirming that perspective-pretrained features
must be adapted to native RPC rays. Removing the datum mechanism from the final
model raises error from $2.99$ to $150.84$\,m; retaining only the analytic
anchor gives $\AnchorAbs$\,m, showing that the analytic construction provides
most of the absolute placement while the learned residual refines it. In the matched retraining arm, removing the monocular stream raises error
from $3.59$ to $7.45$\,m. The two arms address different regimes. With a converged multi-view branch,
in-domain multi-view evidence is already strong and the average effect of the
monocular prior is correspondingly small ($0.014$\,m); under matched compute
it compensates for an undertrained multi-view branch ($3.59$ against $7.45$\,m
without it). Its value concentrates where multi-view evidence is weak, whether
from a limited training budget or, as on L3 (Sec.~\ref{sec:exp-gain}), from
weak geometry.
The three fusion-arm rows are inference-time switches on the shared
final checkpoint ($w{\equiv}1$, $w{\equiv}0$, $w{\equiv}0.5$); no
retraining is involved.

The remaining changes are comparatively small: uncalibrated fusion gives
$3.02$\,m, two views give $3.34$\,m, and varying the hypothesis count over
$K\in[48,144]$ moves absolute error by less than $0.1$\,m.
Table~\ref{tab:backbones} swaps the monocular prior under the same matched
budget, where Depth-Anything-V2 and MoGe-2 give $3.58$ and $3.59$\,m, so the
result does not rest on one particular prior. The inference-time single-stream
and fixed-weight switches remain competitive in domain, as expected of C3,
which is a conditional reliability claim and not a claim of a large uniform
average gain. We do not interpret differences among closely matched variants;
the ablation is designed to expose large effects, not to rank configurations
separated by marginal numbers.

	\begin{figure}[!tb]
		\centering
		\includegraphics[width=\linewidth]{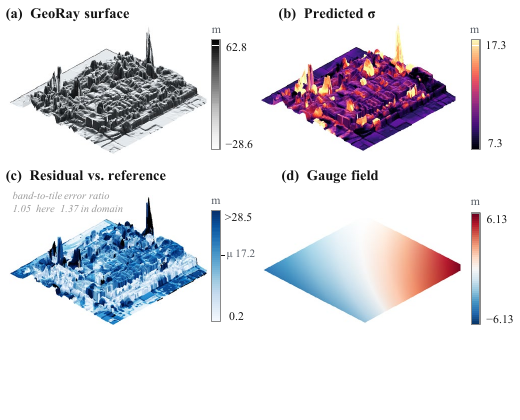}
		\caption{\textbf{Residual structure under cross-city transfer.}
Predicted uncertainty and the fitted low-order gauge separate local relief error
from the remaining datum component on ATL\_554.}
\label{fig:failure}
	\end{figure}

\subsection{Scope and Outlook}
\label{sec:exp-failure}

Fig.~\ref{fig:failure} characterizes the residual structure under the strongest cross-city shift. Under the
strong L3 shift, the residual changes from predominantly local structure error
to a smoother low-order field; sparse affine control reduces the tier error
from $23.88$ to $16.94$\,m and recovers $74\%$ of the corresponding affine
oracle gain. Predicted uncertainty highlights difficult regions, while the remaining
low-order datum component is better addressed by geometric control; the two
mechanisms are complementary.

The present formulation has two main limitations. The gauge basis is
intentionally low order, and the uncertainty model describes relief but not
the per-tile datum. Richer sensor-error bases and joint relief--datum
uncertainty are natural extensions. Variable-cardinality ray aggregation and
coarse-to-fine height sampling are engineering extensions for large-area
deployment. Overlap-and-blend window decoding can remove the faint window
boundaries visible in some L2 tiles of Fig.~\ref{fig:gallery} without changing
the observability or gauge formulation.

\section{Conclusion}
\label{sec:conclusion}

Satellite 3D reconstruction in the absolute geodetic frame is not
perspective reconstruction at a different scale. Native RPC imaging creates
three coupled requirements: correspondence must be observable along
object-space height rays, absolute elevation must be separated from a
low-order height--datum gauge, and monocular and multi-view evidence must be
combined according to calibrated reliability. \method{} addresses these
requirements explicitly through ray-consistent adaptation, a
coordinate-equivariant datum mechanism, and precision-weighted relief fusion;
\bench{} provides an evaluation in which absolute placement and completeness
remain visible.

Across in-domain, cross-dataset, and cross-city tests, one trained model
produces dense geodetic surfaces without per-scene optimization and leads the
compliant feed-forward methods in absolute accuracy. The mechanism experiments add matching evidence: adaptation restores
localization along native RPC rays, the measured camera gauge explains why
sparse control acts mainly on absolute level, and uncertainty predicts where
multi-view geometry contributes most. They point to a design principle for
geometric foundation models under non-central sensing: preserve the native
camera model, expose the gauge of the observation model, and learn only the
degrees of freedom the measurements support. We expect the same formulation to
carry over to other non-central or rolling-shutter sensors, to multi-date and
multi-sensor constellations, and to radar--optical fusion, wherever a reusable
geometric prior has to coexist with low-dimensional, physically interpretable
sensor uncertainty.

\section*{Acknowledgment}
	This work was supported by the National Natural Science Foundation
	of China under Grant 624B2051. The authors thank the providers of
	the public datasets underlying \bench{}: the Johns Hopkins
	University Applied Physics Laboratory and the IEEE GRSS Image
	Analysis and Data Fusion Technical Committee (US3D/DFC2019), IARPA
	(MVS3D), the SpaceNet partners (SN4), and Maxar/DigitalGlobe for
	the underlying imagery. 
	
	\bibliography{refs}
	
\end{document}